\newtheorem{theorem}{Theorem}
\newtheorem*{theorem*}{Theorem}
\declaretheorem[name=Definition, sibling=theorem, refname={definition,definitions}, Refname={Definition,Definitions}]{definition}
\title{Extreme Multi-label Classification from Aggregated Labels}
\author[$\dagger$]{Yanyao Shen}
\author[$\ddagger$]{Hsiang-fu Yu}
\author[$\dagger\ddagger$]{Sujay Sanghavi}
\author[$\ddagger\mathsection$]{Inderjit Dhillon}
\affil[$\dagger$]{Department of ECE,  The University of Texas at Austin}
\affil[$\ddagger$]{Amazon}
\affil[$\mathsection$]{Department of Computer Science, The University of Texas at Austin}
\date{}
\begin{document}
	
\maketitle

\begin{abstract}
Extreme multi-label classification (XMC) is the problem of finding the relevant labels for an input, from a very large universe of possible labels. We consider XMC in the setting where labels are available only for groups of samples - but not for individual ones. Current XMC approaches are not built for such multi-instance multi-label (MIML) training data, and MIML approaches do not scale to XMC sizes. We develop a new and scalable algorithm to impute individual-sample labels from the group labels; this can be paired with any existing XMC method to solve the aggregated label problem. We characterize the statistical properties of our algorithm under mild assumptions, and provide a new end-to-end framework for MIML as an extension. Experiments on both  aggregated label  XMC and MIML tasks show the advantages over existing approaches.
\end{abstract}

\section{Introduction}
Extreme multi-label classification (XMC) is the problem of finding the relevant labels for an input from a very large universe of possible labels. 
XMC has wide applications in machine learning including product categorization~\cite{agrawal2013multi,yu2014large}, webpage annotation~\cite{partalas2015lshtc} and hash-tag suggestion~\cite{denton2015user}, 
where both the sample size and the label size are extremely large. 
Recently, many XMC methods have been proposed with new benchmark results on standard datasets~\cite{prabhu2018extreme,guo2019breaking,jain2019slice}. 

XMC problem, as well as many other modern machine learning problems, often require a large amount of data. 
As the size of the data grows, the annotation of the data becomes less accurate, and 
large-scale data annotation with high quality becomes growingly expensive. 
As a result, modern machine learning applications need to deal with certain types of
weak supervision, including partial  but noisy labeling  and active labeling. 
These scenarios lead to exploration of advanced learning methods including
semi(self)-supervised learning, robust learning and  active learning. 

In this paper, we study a typical weak supervision setting for XMC  named  Aggregated Label eXtreme  Multi-label Classification (\problemname), where only aggregated labels are provided to a group of samples. 
\problemname~is of interest in many practical scenarios where directly annotated training data can not be extracted easily, which is often due to the way data is organized. 
For example, Wikipedia contains a set of annotated labels for every wiki page, and can be used by an XMC algorithm for the task of tagging a new wiki page. 
However, if one is interested in predicting keywords for a new wiki \textit{paragraph}, there is no such directly annotated data. 
Similarly, in e-commerce, the attributes of a product may not be provided directly, but the attributes of the brand of the product may be easier to extract. 
To summarize, it is often easier to get aggregated annotations belonging to a group of samples. This is  known as  multi-instance multi-label (MIML)~\cite{zhou2012multi} problem in the non-extreme label size setting.

\problemname~raises new challenges that standard  approaches are not able to address. 
Because of the enormously large label size, directly using MIML methods  leads to computation and memory issues. 
On the other hand, standard XMC approaches suffer from two main problems when directly applied to \problemname: (1) higher computation cost due to increased number of positive labels, and (2) worse performance due to ignoring of the aggregation structure. 
In this work, we propose an Efficient AGgregated Label lEarning algorithm (\algname) that assigns labels to each sample by learning  label embeddings based on the structure of the aggregation. 
More specifically, the key ingredient of \algname~follows the simple principle that  \textit{the label embedding should be close to the embedding of at least one of the sample points in every positively labeled group}.  
We first formulate such an estimator, then design an iterative algorithm that takes projected gradient steps to approximate it. 
As a by-product, our algorithm naturally extends to the non-XMC setting as a new end-to-end framework for the MIML problem. 
Our main contributions include:
\begin{itemize} 
	\item We propose to study  \problemname, which has significant impact for modern machine learning applications. We propose an efficient and robust algorithm \algname~with low computation  cost (\textbf{ Section \ref{sec:alg}}) that can be paired with any existing XMC method for solving \problemname.

	\item We provide theoretical analysis for \algname~and show the benefit of label assignment, the property of the estimator and the convergence of our iterative update (\textbf{Section \ref{sec:analysis}}).
	
	\item The proposed method can be easily extended to the regular (non-extreme) MIML  setting (\textbf{Section \ref{sec:alg_extension}}). Our solution can be viewed as a co-attention mechanism between labels and samples. We empirically show its benefit over previous MIML framework in \textbf{Section \ref{sec:exp}}. 
\end{itemize}

\section{Related Work}
\label{sec:related}

\paragraph{Extreme multi-label classification (XMC).}
The most classic and straightforward approach for XMC is the One-Vs-All (OVA) method~\cite{yen2016pd,babbar2017dismec,liu2017deep,yen2017ppdsparse}, which simply treats each label separately and learns a classifier for each label. 
OVA has shown to achieve high accuracy, but the computation is too expensive for extremely large label set. 
Tree-based methods, on the other hand, try to improve the efficiency of OVA by using hierarchical representations for samples~\cite{agrawal2013multi,prabhu2014fastxml,jain2016extreme,si2017gradient} or labels~\cite{prabhu2018parabel,jain2019slice}. Among these approaches, label partitioning based methods, including Parabel~\cite{prabhu2018parabel}, have achieved leading performances with training cost sub-linear in the number of labels. 
Apart from tree-based methods, embedding based methods~\cite{zhang2018deep,chang2019xbert,you2019haxmlnet,guo2019breaking} have been studied recently in the context of XMC in order to better use the textual features. 
In general, while embedding based methods may learn a better representation and use the contextual information better than tf-idf, the scalability of these approaches is worse than tree-based methods. 
Very recently,  Medini et al.~\cite{medini2019extreme} apply sketching to learn XMC models with  label size at the scale of $50$ million.

\paragraph{Multi-instance multi-label learning  (MIML).} 
MIML~\cite{zhang2007multi} is a general setting that includes both multi-instance learning (MIL)~\cite{dietterich1997solving,maron1998framework} and multi-label learning (MLL)~\cite{mccallum1999multi,zhang2013review}. \problemname~can be categorized as a special MIML setting with extreme label size. 
Recently, Feng and Zhou~\cite{feng2017deep} propose the general deep MIML architecture with  a `concept' layer and two max-pooling layers to align with the multi-instance nature of the input.  
In contrast, our approach learns label representations to use them as one branch of the input. 
On the other hand, Ilse et al. \cite{ilse2018attention} adopt the attention mechanism for multi-instance learning. 
Similar attention-based mechanisms are later used in learning with sets~\cite{lee2019set} but focus on a different problem. 
Our label assignment based algorithm \algname~can be viewed as an analogy to the attention-based mechanisms, while having major differences from previous work. 
\algname~provides the intuition that   attention   truly happens between the label representation and the sample representation, while previous methods do not.  
The idea of jointly considering sample and label space exists in the multi-label classification problems in vision~\cite{weston2011wsabie,frome2013devise}. 
While sharing the similar idea of learning a joint input-label space, our work addresses the multi-instance learning challenges as well as scalability in the XMC setting.

\paragraph{Others.} 
\problemname~is also related to a line of theoretical work on learning with shuffled labels and permutation estimation ~\cite{collier2016minimax,pananjady2017linear,abid2017linear,pananjady2017denoising,hsu2017linear,haghighatshoar2017signal}, where the labels of all samples are provided without correspondences. 
Our work uniquely focuses on an aggregation structure where we know the group-wise correspondences. 
Our targeted applications have extreme label size that makes even classically efficient estimators hard to compute. 
Another line of work studies learning with noisy labels~\cite{natarajan2013learning,liu2015classification}, where one is interested in  identifying the subset of correctly labeled samples~\cite{shen2019learning}, but there is no group-to-group structure. 
More broadly, in the natural language processing context, indirect supervision~\cite{chang2010structured,wang2018deep} tries to address the label scarcity problem where  large-scale coarse annotations are provided with very limited fine-grain annotations. 

\section{Problem Setup and Preliminaries}
\label{sec:formulation}

In this section, we first provide a brief overview of XMC, whose definition helps us   formulate  the  aggregated label XMC (\problemname) problem. 
Based on the formulation, we use one toy example to illustrate the shortcomings of existing XMC approaches when applied to \problemname. 

\paragraph{XMC and \problemname~formulation.} 
An XMC problem can be defined by $\{\Xb, \Yb\}$, where $\Xb\in \mathbb{R}^{n\times d}$ is the feature matrix for all $n$ samples,  $\Yb\in \{0,1\}^{n\times l}$ is the sample-to-label binary annotation matrix with label size $l$ (if sample $i$ is annotated by label $k$ then $\Yb_{i,k}=1$). 
For the \problemname~problem, however, such a clean annotation matrix is not available. 
Instead, aggregated labels are available for subsets of samples. We use   $m$ intermediate nodes to represent this aggregation, where each node is connected to a subset of samples and gets annotated by multiple labels. 
More specifically, \problemname~can be described by $\{\Xb, \Yb^1, \Yb^2\}$, where the original annotation matrix is replaced by two binary matrices $\mathbf{Y^1}\in \{0,1\}^{n\times m}$ and $\mathbf{Y^2}\in \{0,1\}^{m\times l}$. 
$\Yb^1$ captures how  the samples are grouped, while $\Yb^2$ captures the labels for the aggregated samples. 
The goal is to use $\{\Xb, \Yb^1, \Yb^2\}$ to learn a good extreme multi-label classifier. 
Let $\bar{g}={\mathtt{nnz}(\Yb^1)}/{m}$ denote the average group size. In general, the larger $\bar{g}$ is, the weaker the annotation quality becomes. 
For convenience, let $\Ncal, \Mcal, \Lcal$ be the set of samples, intermediate nodes, and labels, respectively. 
Let $\Ncal_j, \Lcal_j$ be the set of samples, labels linked to intermediate node $j$ respectively, $\forall j\in \Mcal$, 
and $\Mcal_k$ be the set of  nodes in $\Mcal$ connected to label $k\in \Lcal$\footnote{We slightly abuse the notation and use $\Ncal_j, \Mcal_k, \Lcal_j$ for the corresponding index sets as well.}. Let $\xb_{i}^\top$ be the $i$-th row in $\Xb$ for   $i\in \Ncal_j$, $j\in \Mcal$, and $\Xb_{\Scal}$ be the submatrix of $\Xb$ that includes rows with index in set $\Scal  \subseteq \Ncal$\footnote{We summarize all notations in the Appendix.}.

\begin{figure}[t]
	\centering
	\includegraphics[width =0.5\columnwidth ]{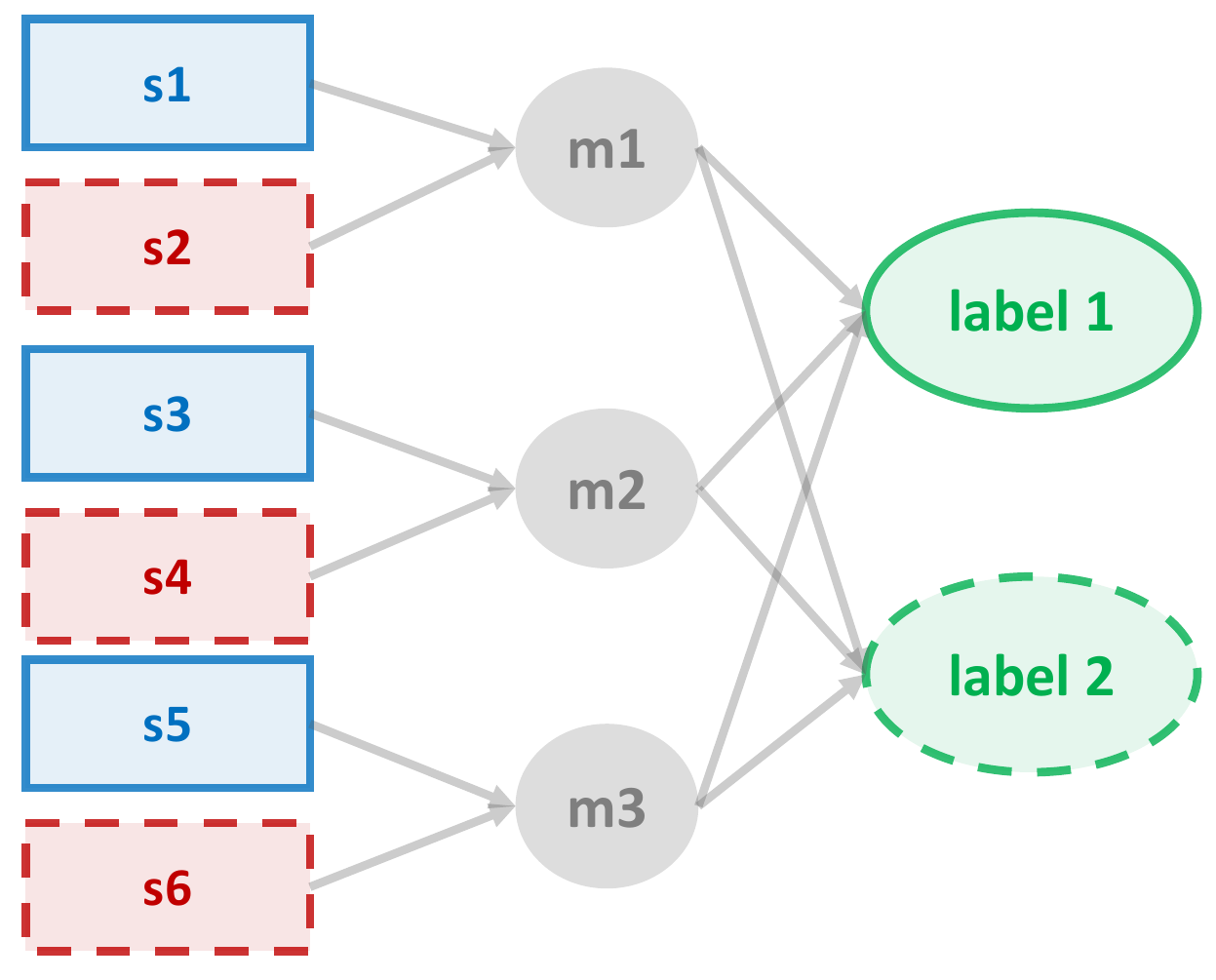}
	\caption{
	A toy example of \problemname.  
	Label 1 \& 2 are both tagged for each group in $\{s_1, s_2\}, \{s_3, s_4\}, \{s_5, s_6\} $. 
	Samples $s_1, s_3, s_5$ all have feature $\xb$,  while $s_2, s_4, s_6$ all have feature $-\xb$.  
	A good model should identify that $s_1, s_3, s_5$ and $s_2, s_4, s_6$ belong to two labels, respectively (order does not matter). 
	Ignoring the aggregation structure makes standard XMC approaches fail completely. 
	}
	\label{fig:illustration}
\end{figure}

\paragraph{Deficiency of existing XMC approaches.} 
Existing XMC approaches can be applied for \problemname~by treating the product $\Yb^1\Yb^2$ directly as the annotation matrix, and learning a model using $\{\Xb, \Yb^1\Yb^2 \}$. 
While using all labeling information, this simple treatment ignores the possible incorrect correspondences due to label aggregation. 
To see the problem of this treatment, we take the XMC method Parabel~\cite{prabhu2018parabel}  as an example. 
Notice that the deficiency generally holds for all standard XMC approaches, but it is convenient to illustrate for a specific XMC method. 
Parabel is a scalable   algorithm that achieves good performance on standard XMC datasets based on a partitioned label tree:  
 It first calculates label representations $\Lb = \Yb^\top \Xb$ that summarize the positive samples associated with every label (\textbf{step 1});  
 Next, a balanced hierarchical partitioning of the labels is learned using $\Lb$ (\textbf{step 2});  
 Finally, a hierarchical probabilistic model is learned given the label tree (\textbf{step 3}). 
Notice that the label embedding that Parabel would use for \problemname~if we naively use $\Yb^1\Yb^2$ as the annotation matrix is given by: 
\begin{align}
\mathbf{L} = \left( \mathbf{Y^1Y^2} \right)^\top \mathbf{X}.  \label{eqt:label-embedding-pifa} %\mbox{~~~~ (PIFA embedding)}. 
\end{align}
Consider an example where  there are $n$ samples and $2$ labels, with $\Xb \in \mathbb{R}^{n\times d}$,   $\Yb^1\in \{0, 1\}^{n \times \frac{n}{2}}$, $\Yb^2\in \{0,1\}^{\frac{n}{2} \times 2}$ defined as follows:
$$
\mathbf{X} = \mathbf{1}_{\frac{n}{2}}\otimes  \begin{bmatrix}
  \mathbf{x}^\top   \\ 
 - \mathbf{x}^\top  \\ 
\end{bmatrix},
\mathbf{Y^1} = \Ib_{\frac{n}{2}} \otimes \mathbf{1}_2, 
\mathbf{Y^2} = \mathbf{1}_{\frac{n}{2}\times 2}, 
$$ 
where $\otimes$ is the Kronecker product, $\mathbf{1}_d$($\mathbf{1}_{d_1\times d_2}$) is an all-ones vector(matrix) with dimension $d$($d_1\times d_2$) and $\Ib_d$ is the identity matrix with size $d$. 
A pictorial explanation is shown in Figure \ref{fig:illustration}. 
The embedding calculated using the above $\mathbf{X, Y^1, Y^2}$ leads to $\mathbf{L} = \mathbf{0}_{n\times 2}$ and loses all the information. With this label embedding, the clustering algorithm in step 2 and the probabilistic model in step 3 would fail to learn anything useful. 
However, a good model for the above setting should classify samples with feature close to $\xb$ as label $1$ and samples with feature close to $-\xb$ as label $2$ (or vice versa). 
Such a failure of classic XMC approaches motivates us to provide algorithms that are robust for the \problemname~problem.

\section{Algorithms}
\label{sec:alg}

The main insight we draw from the toy example above is that ignoring the aggregation structure may lead to serious information loss. 
Therefore, we propose an efficient and robust label embedding learning algorithm to address this. 
We start with the guiding principle of our approach, and then explain the algorithmic details. 

Given an XMC dataset with aggregated labels, our key idea for finding the embedding for each label is the following:
\begin{quote}
\textit{  The embedding of label $k\in \Lcal$ should be close to the embedding of \textbf{at least one} of the samples in $\Ncal_{j}$, $\forall j\in \Mcal_k$. }
\end{quote}

The closeness here can be any general characterization of similarity, e.g., the standard cosine similarity. 
According to this rule, in the previous toy example, the optimal label embedding for both labels is either $[\mathbf{x}, -\mathbf{x}]$  or $[-\mathbf{x}, \xb]$, instead of $[\mathbf{0}, \mathbf{0}]$.  
More formally, the label embedding for label $k\in \Lcal$ is calculated based on the following:
\begin{align}\label{eqt:label_embedding}
	\hat{\eb}_{k} = \arg\max_{\mathbf{e}: \|\mathbf{e}\| = 1} \sum_{j \in \Mcal_{k}} \max_{i\in \Ncal_{j}} \similarity{   \mathbf{x}_{i}}{ \mathbf{e}  }. 
\end{align}
where $\Mcal_{k}, \Ncal_j$ are as defined in Section \ref{sec:formulation}.

\renewcommand{\algorithmiccomment}[1]{\textit{#1}} 

\begin{algorithm}[tb]
\caption{\textsc{Group\_Robust\_Label\_Repr} (\textsc{GRLR})}
\label{alg:1}
\begin{algorithmic}
\STATE \textbf{Inputs:}  $\Mcal_k$, $ \{ \Ncal_j\}_{j\in \Mcal_k}$, $\Xb$.
\STATE \textbf{Output:} Label embedding $\eb_k$. 
\STATE \textbf{Initialize:} Set ${\eb}^0 \leftarrow \mathtt{Proj}\left(  \sum_{j\in \Mcal_k} \sum_{i\in \Ncal_j} \xb_{i} \right) $. 
\FOR[\hfill \textit{\textcolor{gray}{ /* where  $\mathtt{Proj}(\xb): = \xb/\|\xb\|$ */ }}]{$t =1, \cdots, T$} 
    \FOR{$j\in \Mcal_k$}
	\STATE $v_{i,j} \leftarrow \langle {\eb}^{t-1}, \xb_{i}\rangle $, $\forall i \in \Ncal_j$.
	\STATE $a_{i,j}\! \leftarrow\! \mathbf{1} \left\{ v_{i,j} == \max_{i^\prime \in \Ncal_j } v_{i^\prime,j} \right\}$, $\forall i \in \Ncal_j$.
	\ENDFOR
	\STATE $\gb^t \leftarrow \mathtt{Proj}\left( \sum_{j\in \Mcal_k} \sum_{i\in \Ncal_j} a_{i,j} \xb_{i} \right)$.
	\STATE $\eb^t \leftarrow \mathtt{Proj} \left(  \eb^{t-1} + \lambda \cdot  \gb^t  \right) $.
\ENDFOR
\STATE \textbf{Return:} $\eb^T$.
\end{algorithmic}
\end{algorithm}

The goal of (\ref{eqt:label_embedding}) is to find label $k$'s representation that is robust even when every group has samples not related to $k$.  
However, finding the estimator in (\ref{eqt:label_embedding}) is in general hard, since the number of possible combinations for choosing the maximum in each group is exponential in $n$. 
We provide an iterative algorithm that alternates between the following two steps for $T$ times to approximate this estimator: 
(i) identify the closest sample in each group given the current label embedding, and 
(ii) update the label embedding based on the direction determined by all currently closest samples. 
This is formally described in Algorithm \ref{alg:1}.  

The complete algorithm Efficient AGgregated Label lEarning (\algname)  is formally described in Algorithm \ref{alg:2}, whose output can be directly fed into any standard XMC solver. 
Given the label embedding and a set of samples connected to the same intermediate node, each positive label is assigned to the sample with highest similarity. 
Notice that calculating the label embedding using Algorithm \ref{alg:1} is \textit{equivalent} to Parabel's label embedding in (\ref{eqt:label-embedding-pifa}) if $\bar{g}=1$, i.e., the standard XMC setting. 
For general $\bar{g},  $(\ref{eqt:label-embedding-pifa}) may perform well if  each sample in $\Ncal_j$ contributes equally to the label $k\in \Lcal$, for every node $j\in \Mcal_k$. 
However, this is not always the case. 

\paragraph{Complexity.} 
Computational efficiency is one of the main benefits of \algname.  Notice that for each label, only the samples belonging to its positively labeled groups are used. 
Let $\bar{d}$ be the average feature sparsity and assume each sample has $O(\log l)$ labels, the positive samples for each label is $O(n \log l/ l \cdot \bar{g})$.  Therefore, the total complexity for learning all label's embedding  is  $O( n\bar{d} \log l/l \cdot  \bar{g} l) = O(n\bar{d}\bar{g} \log l)$. 
On the other hand, the time complexity for Parabel (one of the most efficient XMC solver) is $O(n\bar{d}\log l)$ for step 1, $O(l\bar{d}\log l)$ for step 2 and  $O(n\bar{d}\log l)$ for step 3~\cite{prabhu2018parabel}. 
Therefore, \algname~paired with any standard XMC solver for solving \problemname~adds very affordable pre-processing cost. 

\begin{algorithm}[tb]
	\caption{\textsc{\algname}}
	\label{alg:2}
	\begin{algorithmic}
		\STATE \textbf{Inputs:} $\Xb, \Yb^1\in \{0,1\}^{n\times m}, \Yb^2\in \{0,1\}^{m\times l}$.
		
		\STATE \textbf{Output:} A filtered XMC dataset.
		\STATE $\Yb_{\texttt{filter} } \leftarrow \mathbf{0}^{n\times l}$, $\Ncal\leftarrow [n], \Mcal \leftarrow [m], \Lcal\leftarrow [l]$.
		\STATE  $\Ncal_j\leftarrow \{ i\in \Ncal | \Yb^1_{i,j}==1 \}$, $\forall j\in \Mcal$.
		\FOR {$k \in \Lcal $ }
			\STATE $\Mcal_k \leftarrow \{ j\in \Mcal | \Yb^2_{j, k}==1 \}$.
			\STATE $\eb_k \leftarrow $\textsc{GRLR}$(\Mcal_k,  \{ \Ncal_j\}_{j\in \Mcal_k}, \Xb)$.
		\ENDFOR
		\FOR {$j\in \Mcal$ }
			\STATE $\Yb^{\texttt{filter} }(\arg\max_{i\in \Ncal_j} \langle\eb_k, \xb_{i}\rangle, k) \leftarrow 1$,  $\forall k \in \Lcal_j$.
		\ENDFOR
		\STATE \textbf{Return:} $\{\Xb, \Yb^{\texttt{filter}}\}$. %$\textsc{XMC\_Solver}(\Xb, \Yb_{\texttt{filter} )}$.
	\end{algorithmic}
\end{algorithm}
 
\section{Analysis}
\label{sec:analysis}

In this section, we provide theoretical analysis and explanations to the proposed algorithm \algname~in Section \ref{sec:alg}. 
We start with comparing two estimators under the simplified regression setting to explain  when assigning labels to each sample is helpful. 
Next, we analyze the statistical property of the label embedding estimator defined in (\ref{eqt:label_embedding}) in Theorem \ref{thm:2}, and the one-step  convergence result of the key step in Algorithm \ref{alg:1} in Theorem \ref{thm:3}. 

In \algname, a learned label embedding is used to assign each label to the `closest' sample in its aggregated group.  
Therefore, we start with justifying when label assignment would help. 
Since the multi-label classification setting may complicate the analysis, we instead analyze a simplified regression scenario. 
Let $\Zb \in \mathbb{R}^{n\times l}$ be the response of all $n$ samples in $\Xb$. 
Given $\Bb^\star \in \mathbb{R}^{d\times l}$, each group in $\Zb$ is generated according to 
$$
\Zb_{\Ncal_j} = \Pib^j (\Xb_{\Ncal_j}\Bb^\star + \Eb^j) , j\in \Mcal, 
$$
where $\Eb^j$ is the noise matrix, and $\Pib^j$ is an unknown permutation matrix. 
For simplicity, we assume each group includes $g$ samples and the aggregation structure can be described by $\Yb^1 = \Ib_{m}\otimes \mathbf{1}_{g}$,  ${\Yb}^2 =\left( \Ib_{m}\otimes \mathbf{1}_{g}^\top\right) \cdot  \Zb$ with $m=n/g$. 
If each row in $\Zb$ is a one-hot vector,  $\Yb^2$ becomes a binary matrix and $\{ \Xb, \Yb^1, \Yb^2\}$ corresponds to the standard \problemname~problem. 
Our goal here is to recover the model parameter $\Bb^\star$, with $\|\Bb^\star\|=1$ for convenience. 
We assume each row in $\Eb^j$ independently follows $\Ncal(\mathbf{0}, \sigma_e^2 \Ib_l)$, each sample feature is generated according to  $\xb_i = \bar{\xb}_j + \db_i$ for $i\in \Ncal_j, j\in \Mcal$, where $\bar{\xb}_j\sim \mathcal{N}(\mathbf{0}, \sigma_1^2 \Ib_d)$ describes the center of each group, and $\db_i \sim \mathcal{N}(\mathbf{0}, \sigma_2^2 \Ib_d)$ captures the deviation within the group. 
Notice that the special case of $\sigma_1 \ll \sigma_2$ corresponds to all samples are i.i.d. generated spherical Gaussians. 
In the other extreme, $\sigma_2 \ll \sigma_1$ corresponds to samples within each group are clustered well. 
We consider the following two estimators: 
\begin{align}
\hat{\Bb}_{\mathtt{\estimatora}} =& \mathtt{LR}\left( \cup_{j\in \Mcal} \left\{ \left( \mathbf{1}_g^\top \Xb_{\Ncal_j}, \mathbf{1}_g^\top \Zb_{\Ncal_j}  \right) \right\} \right) \nonumber \\
\hat{\Bb}_{\mathtt{\estimatorb}} =& \mathtt{LR} \left(  \cup_{j\in \Mcal} \cup_{i\in \Ncal_j} \left\{ \left( \xb_{i^\prime}, \zb_{i} \right) \right\} \right) \label{eqt:estimators} %\\
%\mbox{where } &\mathtt{LR}(\{ (\xb_i, \zb_i) \}_{i\in [n]}) =  \left( \sum_{i\in [n]} \xb_i\xb_i^\top \right)^{-1} \sum_{i\in [n]} \xb_i\zb_i^\top, \nonumber
\end{align}
where $\mathtt{LR}\left(\{ (\xb_i, \zb_i) \}_{i\in [n]}\right) \!=\!  \left( \sum\limits_{i\in [n]} \xb_i\xb_i^\top \right)^{-1}\! \sum\limits_{i\in [n]} \xb_i\zb_i^\top$ 
and $i^\prime =  \arg\min_{\bar{i}\in \Ncal_j}  \left\| \zb_i -  \hat{\Bb}_{\mathtt{\estimatora}}^\top  \xb_{\bar{i}} \right\|$. 
Here, $\hat{\Bb}_{\mathtt{\estimatora}}$ corresponds to the baseline approach that learns a model without label assignment. 
On the other hand,  $\hat{\Bb}_{\mathtt{\estimatorb}}$ corresponds to the estimator we learn after  assigning each output in the group to the closest instance based on the residual norm using $\hat{\Bb}_{\mathtt{\estimatora}}$. 
We have the following result that describes the property of the two estimators: 

\begin{theorem}\label{thm:1}
	Given the two estimators in (\ref{eqt:estimators}), let $\Rcal_{1}= { \left\| \hat{\Bb}_{\mathtt{\estimatora}} - \Bb^\star \right\|}$, $\Rcal_{2}= {\left\| \hat{\Bb}_{\mathtt{\estimatorb}} - \Bb^\star \right\|} $, $\sigma_{x} = \sqrt{\sigma_1^2 + \sigma_2^2}$, with $n\ge  {c}_0 pgd\log^2 d$, the following holds with high probability (i.e., $1-n^{- {c}_1}$): 
	\begin{align} \label{eqt:thm-1-1} 
	    \Rcal_{1} \le O\left(\sqrt{\frac{1}{p(g\sigma_1^2 + \sigma_2^2)}  } \sigma_e \right),
	\end{align}
	\begin{align}\label{eqt:thm-1-2}
	    \Rcal_{2}\! \le \!  O\left(\!\sqrt{\frac{1}{pg\sigma_x^2}} \sigma_e\!\right)  \!+\! O\left(\!\! %\sqrt{\frac{\sigma_e^2 +  \mathtt{err}_{1}^2 \sigma_x^2 }{ \sigma_e^2  + \sigma_x^2}} 
	    \sqrt{ {\frac{\sigma_e^2}{\sigma_x^2} +  \Rcal_{1}^2 }}
	    \sqrt{  \frac{\sigma_e^2}{\sigma_x^2}  + 1} 
	    \!\right).
	\end{align}
\end{theorem}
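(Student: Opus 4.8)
The plan is to treat the two estimators separately, exploiting the fact that \textbf{aggregation annihilates the unknown permutation} for the first one, and then carefully bounding the damage caused by \emph{misassignment} for the second.

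\textbf{Bounding $\Rcal_1$.} The starting observation is that for any permutation matrix $\Pib^j$ one has $\mathbf{1}_g^\top \Pib^j = \mathbf{1}_g^\top$, so summing the response within a group cancels $\Pib^j$ entirely:
\[
\mathbf{1}_g^\top \Zb_{\Ncal_j} = \left(\mathbf{1}_g^\top \Xb_{\Ncal_j}\right)\Bb^\star + \mathbf{1}_g^\top \Eb^j .
\]
Hence $\hat{\Bb}_{\mathtt{\estimatora}}$ is ordinary least squares over the $m=n/g$ aggregated pairs $(\mathbf{1}_g^\top\Xb_{\Ncal_j},\,\mathbf{1}_g^\top\Zb_{\Ncal_j})$ \emph{with no permutation noise at all}. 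First I would record the Gaussian statistics of these aggregated variables: since $\xb_i=\bar{\xb}_j+\db_i$, the aggregated feature obeys $\mathbf{1}_g^\top\Xb_{\Ncal_j}\sim \Ncal(\mathbf{0}, g(g\sigma_1^2+\sigma_2^2)\Ib_d)$ and the aggregated noise obeys $\mathbf{1}_g^\top\Eb^j\sim\Ncal(\mathbf{0}, g\sigma_e^2\Ib_l)$, independent across groups. Then I would invoke the standard random-design least-squares argument: a covariance-concentration step showing $\sum_j(\mathbf{1}_g^\top\Xb_{\Ncal_j})^\top(\mathbf{1}_g^\top\Xb_{\Ncal_j})$ is, up to constants, $m\,g(g\sigma_1^2+\sigma_2^2)\Ib_d$ once the effective sample-to-dimension ratio $p$ satisfies $n\ge c_0 pgd\log^2 d$, together with a bound on the noise cross term. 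Dividing the noise scale $\sqrt{g}\,\sigma_e$ by the design scale $\sqrt{m\,g(g\sigma_1^2+\sigma_2^2)}$ produces the claimed $\Rcal_1\le O\!\left(\sqrt{1/(p(g\sigma_1^2+\sigma_2^2))}\,\sigma_e\right)$.

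\textbf{Bounding $\Rcal_2$.} Here I would condition on a realization of $\hat{\Bb}_{\mathtt{\estimatora}}$ for which the $\Rcal_1$ bound holds, so that the assignment map becomes a fixed function of the data. The estimator $\hat{\Bb}_{\mathtt{\estimatorb}}$ is OLS over the $n$ reassigned pairs $(\xb_{i'},\zb_i)$, and I would split these into the correctly-assigned pairs (those for which $\xb_{i'}$ is the feature that truly generated $\zb_i$) and the misassigned pairs. On the correctly-assigned pairs the model is clean OLS over the \emph{disaggregated} data, now with the full per-coordinate feature variance $\sigma_x^2=\sigma_1^2+\sigma_2^2$ and $n=mg$ samples; repeating the concentration argument yields the first term $O\!\left(\sqrt{1/(pg\sigma_x^2)}\,\sigma_e\right)$. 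Note this is no larger than $\Rcal_1$ since $g\sigma_x^2\ge g\sigma_1^2+\sigma_2^2$, which quantifies the benefit of label assignment. The remaining work is to control the misassigned pairs.

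\textbf{The main obstacle: misassignment.} A response $\zb_i$ is sent to the wrong feature $\xb_{i'}$ only when its prediction residual $(\Bb^\star-\hat{\Bb}_{\mathtt{\estimatora}})^\top\xb_i+\epsilonb_i$, of scale $\sqrt{\sigma_e^2+\Rcal_1^2\sigma_x^2}$, is large enough to overcome the separation between distinct predicted responses in the group, of scale $\sigma_x$; a Gaussian comparison and anti-concentration bound then gives a per-sample misassignment probability of order $\sqrt{\sigma_e^2/\sigma_x^2+\Rcal_1^2}$, the first factor of the second term. Each misassignment perturbs the normal equations by replacing $\xb_i\zb_i^\top$ with $\xb_{i'}\zb_i^\top$, a perturbation of normalized magnitude $\sqrt{\sigma_e^2/\sigma_x^2+1}$ (feature displacement of order $\sigma_x$ plus response noise of order $\sigma_e$), and multiplying frequency by magnitude yields $O\!\big(\sqrt{\sigma_e^2/\sigma_x^2+\Rcal_1^2}\,\sqrt{\sigma_e^2/\sigma_x^2+1}\big)$. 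The delicate point, and where I expect the real effort, is that misassignment is \emph{not} independent noise: the event that $i'$ is wrong is correlated with both $\xb_{i'}$ and $\zb_i$, so the corrupted terms can bias the estimator rather than merely inflate its variance. Handling this requires bounding the misassigned contribution in operator norm \emph{uniformly} (for instance by a union bound over the at most $g!$ possible within-group assignments, combined with the sub-Gaussian tails of the residuals) rather than in expectation; conditioning on $\hat{\Bb}_{\mathtt{\estimatora}}$ is precisely what makes this union bound legitimate.
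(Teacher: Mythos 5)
Your proposal follows essentially the same route as the paper's proof: the $\mathbf{1}_g^\top\Pib^j=\mathbf{1}_g^\top$ cancellation plus concentration of the aggregated Gaussian design for $\Rcal_1$, and for $\Rcal_2$ a decomposition of the normal equations into correctly-assigned and misassigned parts, with the misassignment rate bounded by comparing the residual variance $\sigma_e^2+\Rcal_1^2\sigma_x^2$ of correct pairs against $\sigma_e^2+2\sigma_x^2$ for incorrect ones and then multiplying rate by per-corruption magnitude. The only cosmetic difference is how the dependence across reassigned pairs is handled (you suggest a union bound over within-group assignments; the paper splits the $n$ samples into $g$ i.i.d.\ subcollections and invokes auxiliary lemmas from prior work), which does not change the substance of the argument.
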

	
We can see the pros and cons of the two estimators from the above theorem. 
The first term in (\ref{eqt:thm-1-2}) is the rate achieved by the maximum likelihood estimator with all correspondences given (known $\Pib^j$s), 
while the second term is a bias term due to incorrect assignment. 
This bias term gets smaller as the measurement noise and the estimation error become smaller. 
In (\ref{eqt:thm-1-1}),  when $\sigma_1 \gg \sigma_2$, $\hat{\Bb}_{\mathtt{\estimatora}}$ achieves the same rate as the optimal estimator, but when $\sigma_1 \ll \sigma_2$, the rate goes down from $n^{-1/2}$ to $(n/g)^{-1/2}$. 
This shows that $\hat{\Bb}_{\mathtt{\estimatora}}$ is close to optimal when the clustering quality is high (within group deviation is small), on the other hand, $\hat{\Bb}_{\mathtt{\estimatorb}}$ is nearly optimal for all clustering methods, while having an additional bias term that depends on the measurement noise. 

Next, we analyze the property of the estimator in (\ref{eqt:label_embedding}), since our iterative algorithm tries to approximate it. 
We assume that for each label $k\in\Lcal$, there is some ground truth embedding $\eb_{k}^\star$, and each sample is associated with one of the labels.
For sample $i$ with ground truth label $k$, its feature vector $\xb_i$ can be decomposed as: 
$
\xb_i = \eb_{k}^\star + \epsilonb_{i}. % \forall i, k, \mbox{  s.t.  } \Tcal(i) = k.
$
Without loss of generality, we only need to focus on the recovering of single label $k\in \Lcal$. 
Further, for simplicity, let us assume that both $\xb_i$ and $\eb_{k}^\star$ have unit norm measured in Euclidean space. 
We introduce the following definition that describes the property of the data:
\begin{definition}\label{def:1}
	Define $\delta=\underset{k_1, k_2}{\min} \left\| \eb_{k_1}^\star - \eb_{k_2}^\star \right\|$ to be the minimum separation between each pair of ground truth label embeddings. 
	Define $f(\gamma)= \underset{\Scal \subset \Mcal, |\Scal|/|\Mcal|\le \gamma}{\max} \frac{1}{|\Scal|}\left\| \sum_{i\in \Scal} \epsilon_i\right\| $ to be the maximum influence of the noise, for $\gamma \in [0,1]$, and let $f=f(1)$. Define $q=\underset{k_1, k_2}{\max} \frac{|\Mcal_{k_1}\cap \Mcal_{k_2}|}{\min\{ \Mcal_{k_1}, \Mcal_{k_2} \}}$ to be the maximum overlap between the set of intermediate nodes associated with two labels. 
\end{definition}
Given the above definition, we have the following result showing the property of the estimator in (\ref{eqt:label_embedding}): 
\begin{theorem}
	\label{thm:2}
	With $q, \delta > 0$, the estimator in (\ref{eqt:label_embedding}) satisfies:
	\begin{align}\label{eqt:thm-2}
	\similarity{\eb_k^\star}{\hat{\eb}_k}   	\ge & 1 - r  f  - (\sqrt{2} r + 2) f^2,
	\end{align}
	where $r = \left( \left[ {1- q - \frac{2 f }{\delta} } \right]_{+}\right)^{-1}-1$. 
\end{theorem}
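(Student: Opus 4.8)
The plan is to play the optimality of $\hat{\eb}_k$ against the ground-truth direction $\eb_k^\star$. First I would record the fixed-point structure of the estimator: for a fixed choice of one sample per group the inner objective $\sum_{j}\similarity{\xb_{i_j}}{\eb}$ is maximized over the unit sphere at the normalized sum of the chosen samples, so at the optimum $\hat{\eb}_k = S/\|S\|$ with $S=\sum_{j\in\Mcal_k}\xb_{\hat{i}_j}$ and $\hat{i}_j=\argmax_{i\in\Ncal_j}\similarity{\xb_i}{\hat{\eb}_k}$. Since each node $j\in\Mcal_k$ carries label $k$, it contains at least one sample $i_j^\star$ with $\xb_{i_j^\star}=\eb_k^\star+\epsilonb_{i_j^\star}$, so evaluating the objective at $\eb_k^\star$ and using Definition~\ref{def:1} gives the lower bound $\|S\|=F(\hat{\eb}_k)\ge F(\eb_k^\star)\ge\sum_{j\in\Mcal_k}\big(1+\similarity{\epsilonb_{i_j^\star}}{\eb_k^\star}\big)\ge|\Mcal_k|(1-f)$, where $F(\eb)$ denotes the objective in~\eqref{eqt:label_embedding}. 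I then partition $\Mcal_k$ into a $\good$ set, on which the selected $\hat{i}_j$ truly has label $k$, and a $\bad$ set, on which it has some other label $k_j$.

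The conceptual crux is controlling how much mass sits in the $\bad$ set. A bad selection in group $j$ means a sample of a different label $k_j$ beats the genuine label-$k$ sample $i_j^\star$ under $\hat{\eb}_k$, i.e. $\similarity{\eb_{k_j}^\star-\eb_k^\star}{\hat{\eb}_k}\ge\similarity{\epsilonb_{i_j^\star}-\epsilonb_{\hat{i}_j}}{\hat{\eb}_k}$, and such a group must be annotated by both $k$ and $k_j$, hence lies in $\Mcal_k\cap\Mcal_{k_j}$. Summing these beat-inequalities and invoking the minimum separation $\delta$ (which forces $\similarity{\eb_k^\star}{\eb_{k_j}^\star}\le 1-\delta^2/2$) shows that a confusion can only be sustained when noise overcomes the separation gap; the pool of structurally confusable groups is limited by the pairwise overlap $q$, and the aggregate noise needed to activate them is controlled through $f$ and $\delta$. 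This is what yields a good-fraction lower bound of the form $|\good|/|\Mcal_k|\ge 1-q-2f/\delta=\tfrac{1}{1+r}$, so that $|\bad|\le r\,|\good|$. Crucially, because the selection $\{\hat{i}_j\}$ is chosen adversarially through $\hat{\eb}_k$, every noise term must be bounded by the worst-case-subset quantity $f=f(1)$ rather than by individual $\|\epsilonb_i\|$, which is exactly why $f(\gamma)$ is defined as a maximum over subsets.

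Finally I would convert these counts into the stated inner-product bound. Writing $s=\similarity{\eb_k^\star}{\hat{\eb}_k}=\similarity{\eb_k^\star}{S}/\|S\|$ and decomposing $S$ over the two sets, the $\good$ groups contribute $|\good|$ to the projection up to an aggregate noise of order $|\good|f$, while each $\bad$ group was selected precisely because its sample aligns with $\hat{\eb}_k\approx\eb_k^\star$; the beat-inequality therefore forces its contribution to be close to $s$ rather than to the pessimistic value $\similarity{\eb_k^\star}{\eb_{k_j}^\star}$, so the bad groups perturb $s$ only at the noise scale amplified by $|\bad|/|\good|\le r$. Combining $\|S\|\ge|\Mcal_k|(1-f)$ with the per-term upper bound $\|S\|\le\similarity{\eb_k^\star}{S}+|\Mcal_k|\sqrt{2(1-s)}$ produces a self-consistent inequality whose only unknown is $1-s$, driven by a first-order noise term scaled by $r$ and by second-order normalization corrections; solving the resulting quadratic in $\sqrt{1-s}$ gives $1-s\le rf+(\sqrt{2}\,r+2)f^2$, which is the claim, with the $\sqrt 2$ arising from $\|\hat{\eb}_k-\eb_k^\star\|=\sqrt{2(1-s)}$ and the quadratic $f^2$ term from the normalization. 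The main obstacle is the coupled $\bad$-set analysis: cleanly splitting the confusion count into the overlap piece $q$ and the noise-over-separation piece $2f/\delta$, and simultaneously showing each bad group's deviation from $\eb_k^\star$ is only noise-order, since $f$ controls cancellation-prone sums and not the per-sample magnitudes that naively decide whether $\delta$ is overcome. Once that interface is fixed, the remaining work is careful but routine constant-tracking through the self-consistent inequality.
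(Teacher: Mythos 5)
Your proposal is correct and follows essentially the same route as the paper's proof: both pit the optimality of $\hat{\eb}_k$ against $\eb_k^\star$, split $\Mcal_k$ into the groups where the selected sample does or does not carry label $k$, lower-bound the good fraction by $1-q-2f/\delta=(1+r)^{-1}$ via a contradiction between the objective's upper bound under many misassignments and its lower bound at $\eb_k^\star$, and then convert counts into the inner-product bound using $\|\hat{\eb}_k-\eb_k^\star\|=\sqrt{2\left(1-\similarity{\eb_k^\star}{\hat{\eb}_k}\right)}$ to produce the $r f$ and $f^2$ terms. The only difference is presentational: you package the endgame as a self-consistent quadratic in $\sqrt{1-s}$ via the normalized-sum representation $\hat{\eb}_k = S/\|S\|$, whereas the paper manipulates the optimality inequality directly, which changes nothing of substance.
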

Theorem \ref{thm:2} characterizes the consistency of the estimator as noise goes to zero. It also quantifies the influence of minimum separation as well as maximum overlap between labels. A smaller $\delta$ and a larger $q$ both leads to harder identification problem, which is reflected in an increasing $r$ in (\ref{eqt:thm-2}). 
We then provide the following one-step convergence analysis for each iteration in Algorithm \ref{alg:1}.

\begin{figure}[t]
	\centering
	\includegraphics[width = \columnwidth ]{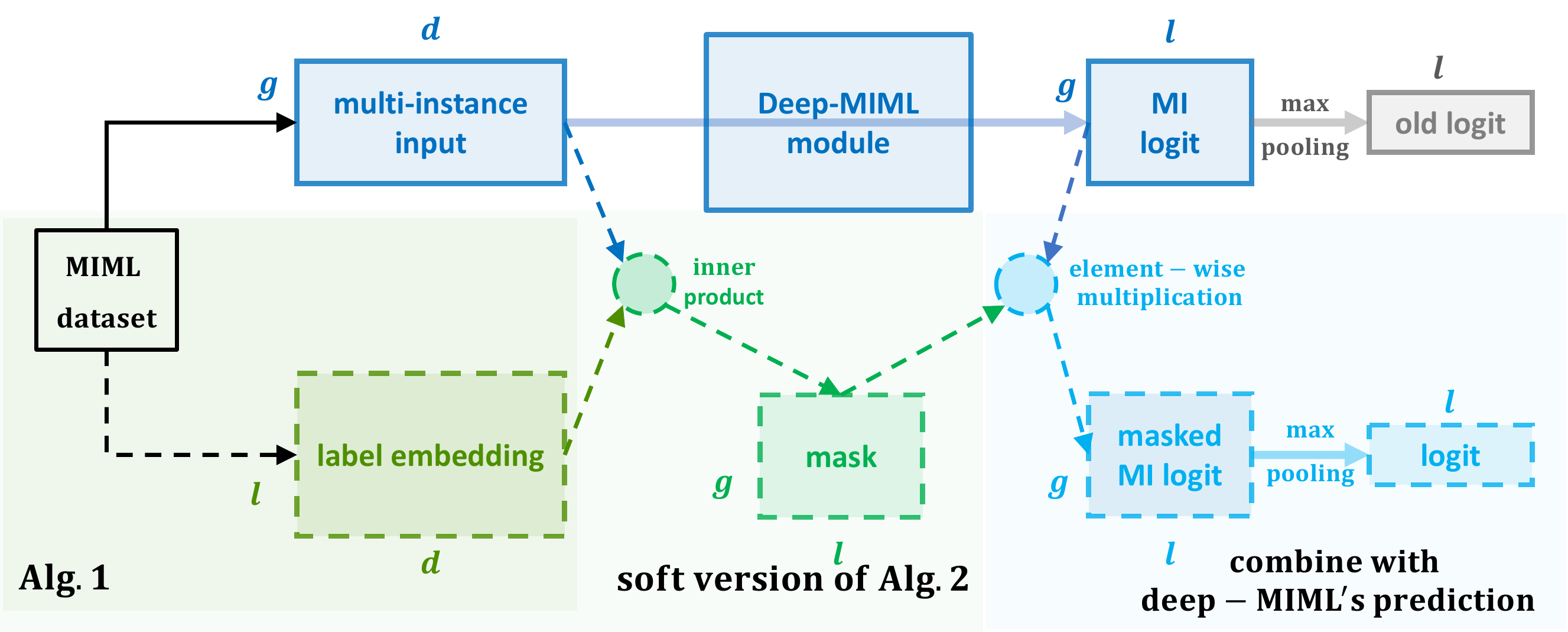}
	\caption{Illustration of extending \algname~to standard MIML framework. The upper branch is the original deep MIML framework \cite{feng2017deep}, where the Deep-MIML module converts $\Xb$ to $\Vb_2$ as in (\ref{eqt:dmiml}). 
	In the lower branch, 
		\algname~first learns label embedding using Algorithm \ref{alg:1} by re-organizing the MIML dataset. Then, a soft-assignment mask is generated based on the inner product between the multi-instance inputs and the label embedding. The masked MI logit is an element-wise multiplication of the mask and the original MI logit. A max-pooling operation over instances on the masked MI logit gives the final logit for prediction, similar to the original deep MIML. 
	}
	\label{fig:miml}
\end{figure}

\begin{theorem}
	\label{thm:3}
	Given label $k$ and current iterate $\eb^t$. 
	Let $\Scal_t^\good$ be the set of groups where $\eb^t$ is closest to the sample belongs to label $k$. 
	Denote $|\Scal_t^\good| / |\Mcal_k| = \alpha_t$. The next step iterate given by Algorithm \ref{alg:1} has the following one-step  property: 
	\begin{align}
		\similarity{\eb_k^\star }{\eb^{t+1}} \ge& \alpha_t + (1-\alpha_t) \left( \similarity{\eb_k^\star}{\eb^{t}} - \norm{\eb_k^\star - \eb^t}\right) 
		- f \nonumber 
	\end{align}
	and a sufficient condition for contraction 
	is $\similarity{\eb_k^\star}{\eb^{t}} \le 1 - 2 \left( \frac{1-f/2}{\alpha_t} -1 \right)^2$. 
\end{theorem}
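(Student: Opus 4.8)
The plan is to analyze the single update direction produced by Algorithm~\ref{alg:1} from the current iterate $\eb^t$. Writing $c_t = \similarity{\eb_k^\star}{\eb^t}$ and $w = \norm{\eb_k^\star - \eb^t}$ (so that $w^2 = 2 - 2c_t$ on the unit sphere), I would first lower bound $\similarity{\eb_k^\star}{\gb^{t+1}}$, where $\gb^{t+1} = \mathtt{Proj}(S)$ and $S = \sum_{j\in\Mcal_k} \xb_{i(j)}$ is the sum of the per-group argmax samples $i(j) = \arg\max_{i\in\Ncal_j}\similarity{\eb^t}{\xb_i}$. The key structural step is to split $\Mcal_k$ into the good set $\Scal_t^\good$ (where $i(j)$ is the sample of label $k$) and its complement, and to account for these two populations separately inside $\similarity{\eb_k^\star}{S}$.

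For a good group the selected sample is $\xb_{i(j)} = \eb_k^\star + \epsilonb_{i(j)}$, so it contributes $1 + \similarity{\eb_k^\star}{\epsilonb_{i(j)}}$; summing gives the leading $|\Scal_t^\good|$ term, i.e.\ the $\alpha_t$ after normalizing by $|\Mcal_k|$. For a bad group $j$ I would exploit the only available leverage: since every $j\in\Mcal_k$ contains at least one sample $i_k(j)$ of label $k$, the optimality that defines the argmax gives $\similarity{\eb^t}{\xb_{i(j)}} \ge \similarity{\eb^t}{\xb_{i_k(j)}} = c_t + \similarity{\eb^t}{\epsilonb_{i_k(j)}}$. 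Converting from $\eb^t$ to $\eb_k^\star$ by Cauchy--Schwarz, $\similarity{\eb_k^\star}{\xb_{i(j)}} \ge \similarity{\eb^t}{\xb_{i(j)}} - \norm{\eb_k^\star-\eb^t}$, which produces the $(1-\alpha_t)(c_t - w)$ term. The scattered inner products against $\epsilonb$ over the good and bad groups are then collected using Definition~\ref{def:1}: the good contribution is $\similarity{\eb_k^\star}{\tfrac{1}{|\Mcal_k|}\sum_{\Scal_t^\good}\epsilonb}$, bounded in magnitude by $\alpha_t f(\alpha_t)\le\alpha_t f$, and the bad contribution analogously by $(1-\alpha_t)f$, summing to at most $f$. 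Finally, using $\norm{S}\le|\Mcal_k|$ to pass from $S$ to $\gb^{t+1}=\mathtt{Proj}(S)$ yields exactly the claimed bound $\similarity{\eb_k^\star}{\gb^{t+1}} \ge \alpha_t + (1-\alpha_t)(c_t - w) - f$; I would then transfer it to $\eb^{t+1}=\mathtt{Proj}(\eb^t+\lambda\gb^{t+1})$, noting that in the full-step regime the direction $\gb^{t+1}$ carries the estimate and the projection only helps along $\eb_k^\star$.

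For the sufficient condition, I would substitute $w^2 = 2(1-c_t)$ into the one-step bound and ask when $c_{t+1}\ge c_t$, i.e.\ when the step contracts toward $\eb_k^\star$. Rearranging $\alpha_t + (1-\alpha_t)(c_t-w) - f \ge c_t$ gives $\alpha_t(1-c_t) \ge (1-\alpha_t)w + f$, and replacing $1-c_t$ by $w^2/2$ turns this into a quadratic inequality in the single variable $w$. Dividing through by $w>0$ and consolidating the noise into the good-group self-similarity correction, I would bring it to the linear form $\alpha_t\big(1+\tfrac{w}{2}\big) \ge 1 - \tfrac{f}{2}$; solving for $w$ gives $w \ge 2\big(\tfrac{1-f/2}{\alpha_t}-1\big)$, and re-expressing through $w^2 = 2(1-c_t)$ yields precisely the stated threshold $c_t \le 1 - 2\big(\tfrac{1-f/2}{\alpha_t}-1\big)^2$. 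The noiseless case $f=0$ already reproduces $c_t \le 1 - 2(\tfrac{1}{\alpha_t}-1)^2$ exactly, which serves as a sanity check and pins down where the $f/2$ correction must enter.

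I expect the main obstacles to be twofold. First, the bad-group term is where all the difficulty concentrates: the selected samples there are effectively adversarial (they may belong to other labels), so the argument must lean entirely on the argmax optimality together with the spherical Cauchy--Schwarz bound, and I must verify that the $-w$ loss is unavoidable and correctly paired with each bad group. Second, the noise bookkeeping that collapses the several $\epsilonb$ inner products into a single clean $f$ in the one-step bound, and the slightly different $f/2$ that surfaces in the contraction threshold, is delicate: it requires matching the fraction of summed noise terms to the $\gamma$ argument of $f(\gamma)$ and absorbing the good-group self-similarity correction so that the final condition takes the stated closed form. The passage from the normalized direction $\gb^{t+1}$ to the projected iterate $\eb^{t+1}$ is comparatively minor but still needs the step size $\lambda$ to be handled explicitly.
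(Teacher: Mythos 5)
Your proposal follows essentially the same route as the paper's proof: the same good/bad split of $\Mcal_k$, the same use of argmax optimality plus $\similarity{\eb_k^\star-\eb^t}{\xb}\ge -\norm{\eb_k^\star-\eb^t}$ for the bad groups, the same collection of noise terms via $\alpha_t f(\alpha_t)+(1-\alpha_t)f(1-\alpha_t)\le f$, and the same normalization by $|\Mcal_k|$. The only differences are that the paper silently identifies $\eb^{t+1}$ with the normalized sum $\gb^{t+1}$ (the step-size issue you flag is real but unaddressed there too) and does not actually derive the contraction threshold in its appendix, so your algebra for that part, including the slightly loose passage from $1+f/w$ to $1-f/2$, is no less rigorous than what the paper provides.
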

Theorem \ref{thm:3} shows how each iterate gets closer to the ground truth label embedding. 
Since our algorithm does not require any assumption on the group size, we do not show the connection between $\alpha_t$ and $\eb^t$ (which requires more restrictive assumptions), but instead provide a sufficient condition to illustrate when the next iterate would improve.  
Notice that as the group size becomes larger, the signal becomes smaller and $\alpha$ is smaller in general.

\section{Extensions}
\label{sec:alg_extension}

In the previous section, \algname~is proposed for \problemname~in the extreme label setting. In the non-extreme case, \algname~naturally leads to a solution for the general MIML problems. 

\paragraph{Deep-MIML Network.} Feng and Zhou \cite{feng2017deep} propose a general deep-MIML network: given a multi-instance input $\Xb$ with shape $g\times d$, the network first transforms it into a $g\times k \times l$ tensor $\Vb_1$ through a fully connected layer and a ReLU layer, where $l$ is the label size and $k$ is the additional dimension called `concept size' \cite{feng2017deep} to improve the model capacity. A max-pooling operation over all `concepts' is taken on $\Vb_1$ to provide a $g\times l$ matrix $\Vb_2$ (which we call multi-instance logit). Finally, max-pooling over all instances is taken on $\Vb_2$ to give the final length-$l$ logit prediction $\hat{\Yb}$. This network can be summarized as: 
\begin{align}
	\Xb \overset{\mbox{FC + ReLU} }{\rarrowfill{1cm}}  \Vb_1 \underset{\mbox{(over concepts)} }{\overset{\mbox{max-pooling} }{\rarrowfill{1.5cm}}}  \Vb_2 \underset{\mbox{(over samples)} }{\overset{\mbox{max-pooling} }{\rarrowfill{1.5cm}}}   \hat{\Yb}
\label{eqt:dmiml}
\end{align}

\paragraph{A co-attention framework} Our idea can be directly applied to modify the deep-MIML network structure, as shown in  
Figure \ref{fig:miml}. 
The main idea is to add a soft-assignment mask to the original multi-instance logit, where this mask mimics the label assignment in \algname. 
After learning the label embedding $\Lb\in \mathbb{R}^{l\times d}$ from the dataset using Algorithm \ref{alg:1}, the mask $\Mb\in \mathbb{R}^{g\times l}$ is calculated by $\Mb=g\cdot \mathtt{Softmax}\left(\tau \Xb_{\Mcal_j} \Lb^\top\right)$ where this softmax operation applies to each column in $\Mb$. As a result, $\Mb_{i,j}$ indicates the affinity between instance $i$ and label $j$. 
Notice that $\tau$ controls the hardness of the assignment, and the special case of $\tau=0$ corresponds to the standard deep-MIML framework. 
Interestingly, this mask can also be interpreted as an attention weight matrix, which is then multiplied with the multi-instance logit matrix $\Vb_2$. 
While there is other literature using attention for MIL~\cite{ilse2018attention}, none of the existing methods uses a robust calculation of the label embedding as the input to the attention. The proposed co-attention framework is easily interpretable since both labels and samples lie in the same representation space, with theoretical justifications we have shown in Section \ref{sec:analysis}. 
Notice that the co-attention framework in Figure \ref{fig:miml} can be trained end-to-end. 

\section{Experimental Results} 
\label{sec:exp}

\begin{table*}[t]
	\centering
	\small 
	\caption{Statistics of $4$ XMC datasets. `sample size' column includes training \& test set.  The last column includes precisions with the clean datasets, which can be thought of as the \textit{oracle} performance given an XMC dataset with aggregated labels.}
	\noindent\makebox[\textwidth]{
	\begin{tabularx}{1.2\textwidth}{l ccc ccc cc}
		\toprule
		\textbf{Dataset} 		& \textbf{\# feat.} 	& \textbf{\# label}	& \textbf{sample size} 		& \textbf{avg samples/label}	& \textbf{avg labels/sample} & \textbf{std. precision (P@1/3/5)}	\\
		\midrule 
		\textbf{EurLex-4K} 		& 5,000 	& 3,993 	& 15,539 / 3,809 		& 25.73 		& 5.31 & 82.71 / 69.42 / 58.14	\\
		\textbf{Wiki-10K} 		& 101,938 	& 30,938	& 14,146 / 6,616 		& 8.52			& 18.64	& 84.31 / 72.57 / 63.39\\
		\textbf{AmazonCat-13K} 	& 203,882 	& 13,330 	& 1,186,239 / 306,782	& 448.57 		& 5.04 & 93.03 / 79.16 / 64.52	\\
		\textbf{Wiki-325K} 		& 1,617,899	& 325,056 	& 1,778,351 / 587,084	& 17.46 		& 3.19 & 66.04 / 43.63 / 33.05	\\
		\bottomrule
	\end{tabularx} 
	}
	\label{tab:xmc-exp-stats}
\end{table*}

\begin{figure}
	\centering
		\includegraphics[width=0.5\linewidth]{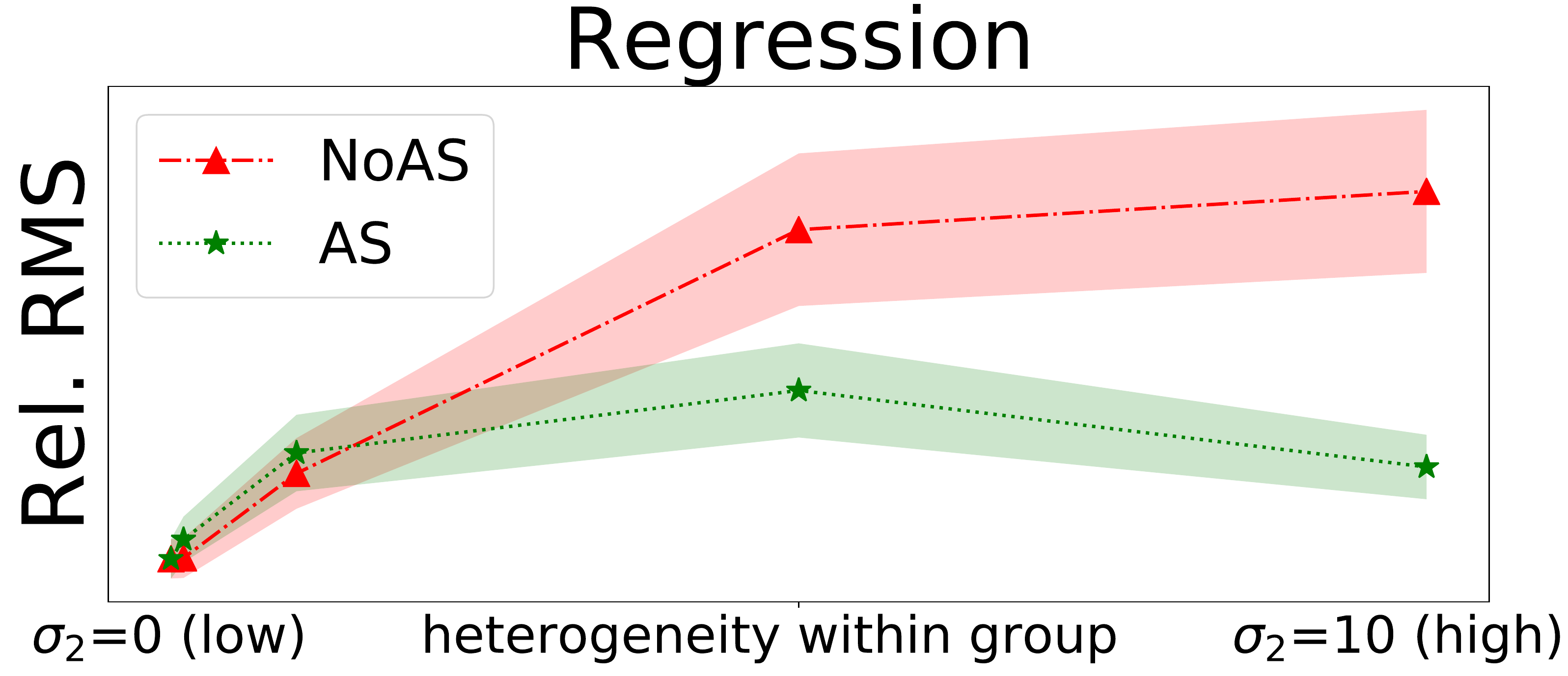}
	\caption{Regression task with aggregated outputs. The advantage of \textbf{\estimatorb}~(estimator w. label assignment) over \textbf{\estimatora}~(estimator w.o. label assignment) matches with the result in Theorem \ref{thm:1}. The $y$-axis is the root mean square (RMS) value normalized by RMS of the maximum likelihood estimator with known correspondences (lower is better). 
	}
	\label{fig:sim}
\end{figure}

In this section, we empirically verify the effectiveness of \algname~from multiple standpoints. 
First, we run simulations to verify and explain the benefit of label assignment as analyzed in Theorem \ref{thm:1}. 
Next, we run synthetic experiments on standard XMC datasets to understand  the  advantages of \algname~under multiple aggregation rules. 
Lastly, for the natural extension of \algname~in the non-extreme setting (as mentioned in Section \ref{sec:alg_extension}), we study multiple MIML tasks and show the benefit of \algname~over standard MIML solution. 
We include details of the experimental settings and more comparison results in the   Appendix.

\subsection{Simulations}
We design a toy regression task to better explain the performance of our approach from an empirical perspective. 
Our data generating process strictly follows the setting in Theorem \ref{thm:1}. We set $\sigma_1=\sigma_e=1.0$ and vary $\sigma_2$ from $0.0$ to $10.0$, which corresponds to heterogeneity within group changes from low to high.

\paragraph{Results.} In Figure \ref{fig:sim}, as the deviation within each sample group increases, \textbf{\estimatorb}~performs much better, which is due to the $\sqrt{g}$ difference in the error rate between (\ref{eqt:thm-1-1}) and the first term in (\ref{eqt:thm-1-2}). On the other hand, \textbf{\estimatorb}~may perform slightly worse than \textbf{\estimatora}~in the well-clustered setting, which is due to the second term in (\ref{eqt:thm-1-2}). 
See another toy classification task with similar observations in the Appendix.

\begin{figure*}
	\centering
	\subfigure{
		\includegraphics[width=.31\linewidth]{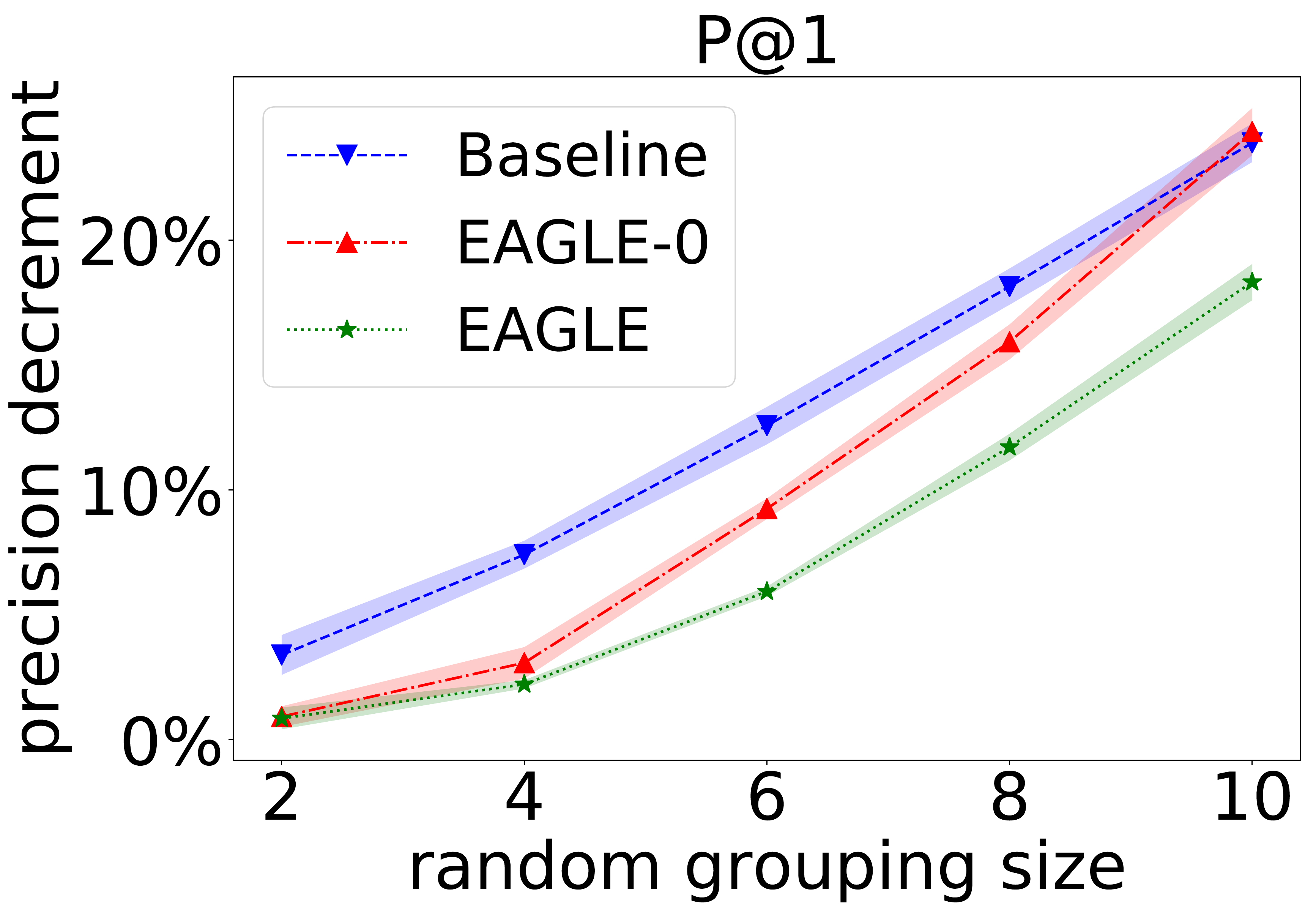}
	}
	\subfigure{
		\includegraphics[width=.31\linewidth]{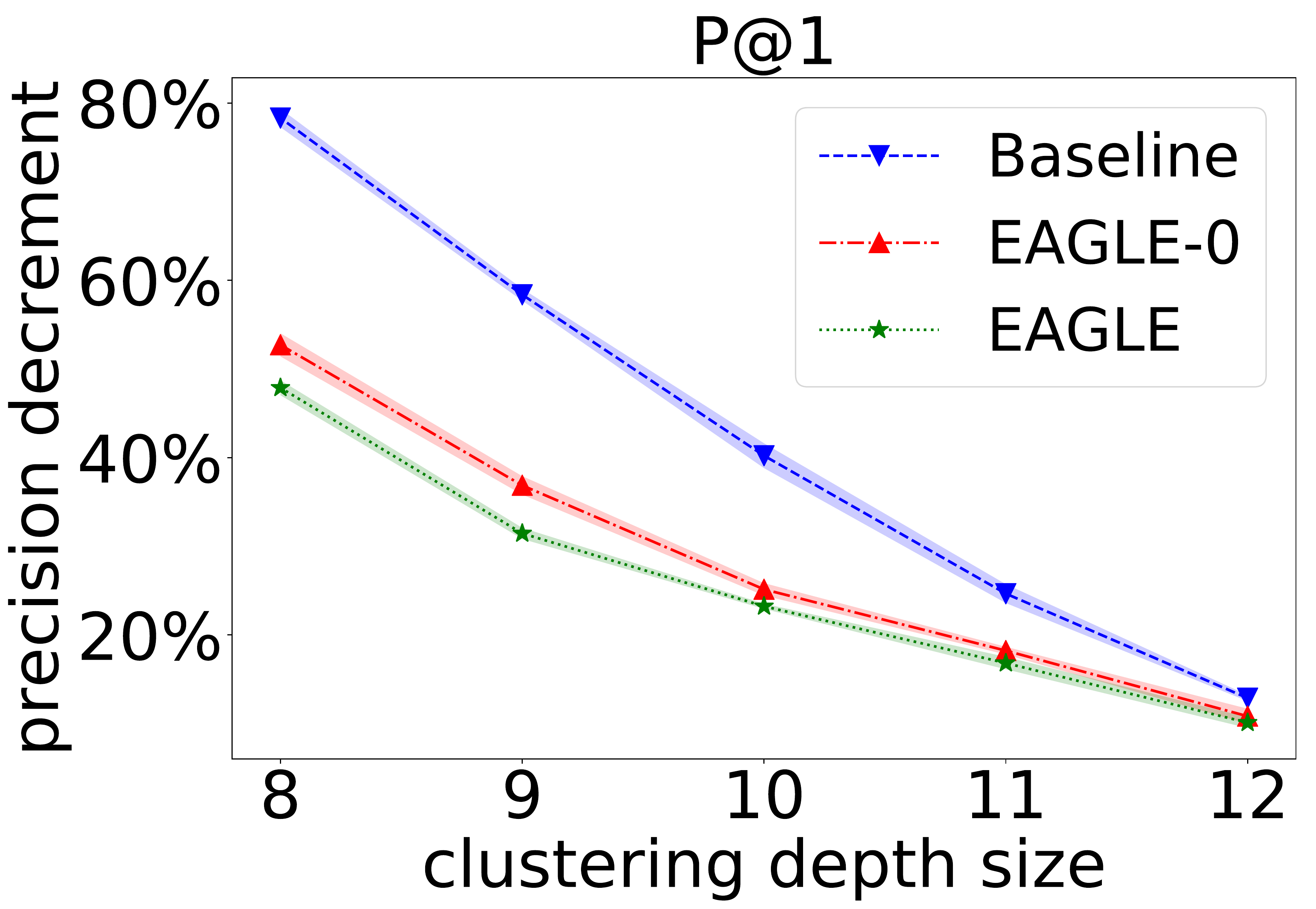}
	}
	\subfigure{
		\includegraphics[width=.31\linewidth]{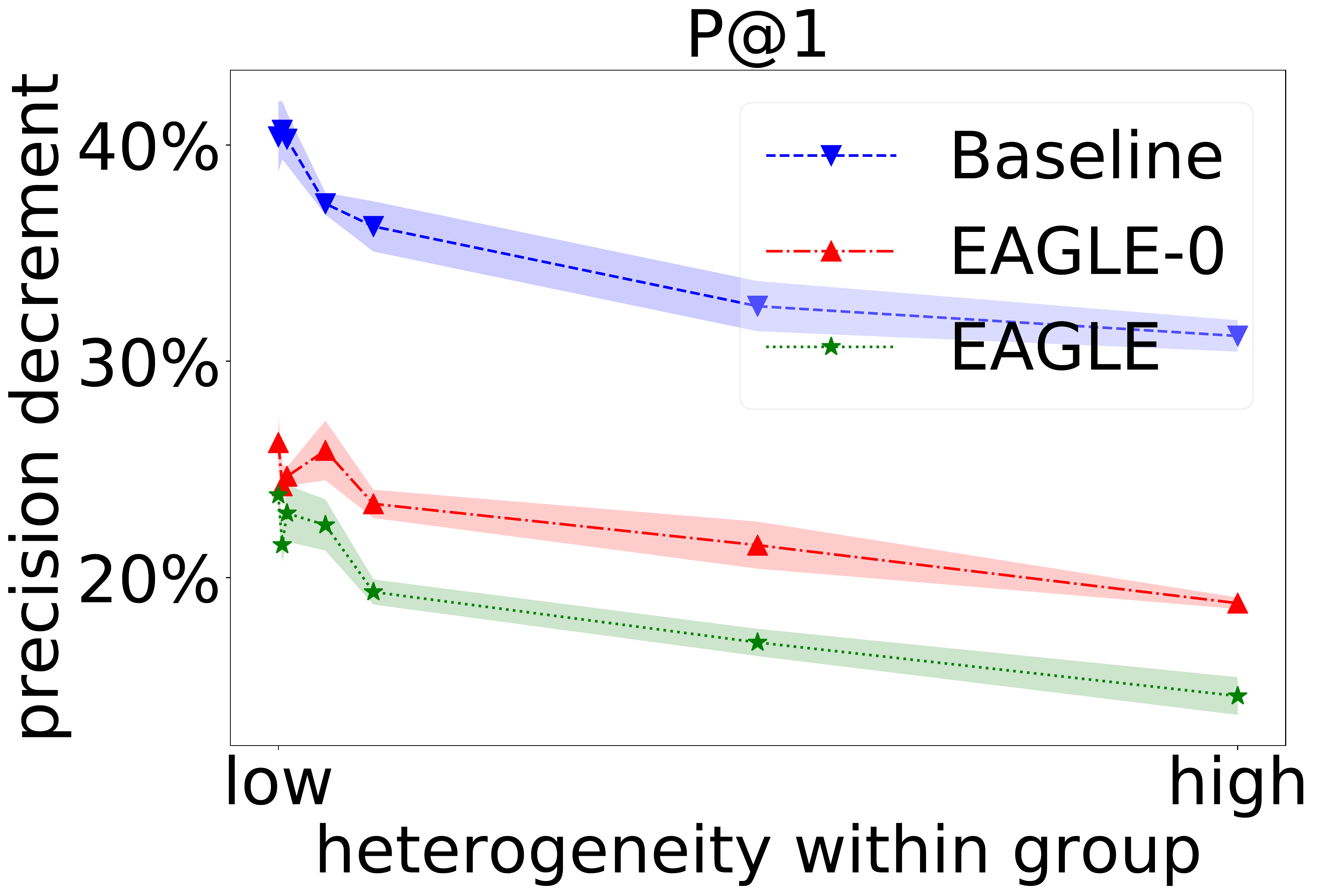}
	}
	\caption{Ablation study: comparing  Precision@1 between \textbf{Baseline} (no label assignment), \textbf{\algname-0} (\algname~without label learning) and \textbf{\algname} (\algname~with label learning). The $y$-axis calculates the percentage of precision decrement over the oracle performance (trained with known sample-label correspondences). We study different factors including \textbf{left}: group size in random grouping; \textbf{middle}: hierarchical clustering depth size; \textbf{right}: heterogeneity within group  changes from low (hierarchical clustering)  to high (random grouping). }
	\label{fig:eurlex-ablation}

\end{figure*}

\subsection{Extreme Multi-label Experiments}

\begin{table*}[]
	\centering
	\small 
	\caption{ Comparing \textbf{Baseline}, \textbf{\algname-0} (\algname~without label learning) and \textbf{\algname}  on small/mid/large-size XMC datasets with aggregated labels. `O' stands for oversized model (>5GB). \textbf{R-4/10} randomly selects $4/10$ samples in each group and observes their aggregated labels. \textbf{C} clusters samples based on hierarchical  k-means. The cluster depth is determined based on sample size ($8$ for EurLex-4k, Wiki-10k and $16$ for AmazonCat-13k and Wiki-325k). }
	\noindent\makebox[\textwidth]{
	\begin{tabularx}{1.2\textwidth}{l c | ccc|ccc|ccc|cccccc }
		\toprule
		& & \multicolumn{3}{c|}{\textbf{EurLex-4k}} & \multicolumn{3}{c|}{\textbf{Wiki-10k}} & \multicolumn{3}{c|}{\textbf{AmazonCat-13k}} & \multicolumn{3}{c}{\textbf{Wiki-325k}} \\
		& & {\scriptsize \bf  Baseline} & {\scriptsize \bf  EAGLE-0} & {\scriptsize \bf  EAGLE} & {\scriptsize \bf Baseline} & {\scriptsize \bf  EAGLE-0} & {\scriptsize \bf  EAGLE} & {\scriptsize \bf  Baseline} & {\scriptsize \bf  EAGLE-0} & {\scriptsize \bf  EAGLE}& {\scriptsize \bf  Baseline} & {\scriptsize \bf  EAGLE-0} & {\scriptsize \bf  EAGLE} \\
		%& & {\footnotesize BS} & {\footnotesize EAGLE-0} & {\footnotesize EAGLE} &  BS & LI & LL &  BS & LI & LL  & BS & LI & LL \\
		\midrule 
		R-4&P1&$76.58$&$80.16$&$\mathbf{80.87}$&$73.07$&$78.28$&$\mathbf{80.38}$&$80.62$&$78.83$&$\mathbf{81.34}$&$34.09$&$\mathbf{62.45}$&$60.60$\\
		&P3&$61.36$&$63.49$&$\mathbf{65.33}$&$60.84$&$63.81$&$\mathbf{66.15}$&$65.81$&$67.45$&$\mathbf{69.97}$&$33.34$&$\mathbf{41.96}$&$40.23$\\
		&P5&$49.50$&$51.01$&$\mathbf{52.88}$&$53.31$&$54.66$&$\mathbf{57.22}$&$54.06$&$54.80$&$\mathbf{56.98}$&$26.05$&$\mathbf{31.23}$&$29.80$\\
		\midrule
		R-10&P1&$62.95$&$62.58$&$\mathbf{67.56}$&$64.87$&$65.21$&$\mathbf{68.89}$&$56.42$&$60.71$&$\mathbf{63.47}$&O&$52.43$&$\mathbf{54.81}$\\
		&P3&$47.72$&$44.07$&$\mathbf{48.03}$&$51.13$&$50.33$&$\mathbf{53.60}$&$47.74$&$49.05$&$\mathbf{51.59}$&O&$34.02$&$\mathbf{35.80}$\\
		&P5&$\mathbf{37.59}$&$33.25$&$35.91$&$42.89$&$41.22$&$\mathbf{44.68}$&$\mathbf{40.74}$&$38.06$&$39.47$&O&$24.84$&$\mathbf{26.18}$\\
		\midrule 
		C&P1&$17.94$&$39.11$&$\mathbf{43.11}$&$16.34$&$39.13$&$\mathbf{40.25}$&$\mathbf{74.65}$&$56.41$&$56.19$&O&$45.03$&$\mathbf{46.39}$\\
		&P3&$16.27$&$28.02$&$\mathbf{30.08}$&$16.08$&$30.70$&$\mathbf{31.21}$&$\mathbf{64.85}$&$48.32$&$48.07$&O&$27.98$&$\mathbf{28.96}$\\
		&P5&$14.83$&$22.46$&$\mathbf{23.78}$&$15.96$&$25.69$&$\mathbf{26.09}$&$\mathbf{53.61}$&$40.01$&$39.93$&O&$20.53$&$\mathbf{21.27}$\\
		\bottomrule
	\end{tabularx} 
	}
	\label{tab:amazoncat-wiki10}
\end{table*}

We first verify our idea on $4$ standard extreme classification tasks\footnote{http://manikvarma.org/downloads/XC/XMLRepository.html}(1 small, 2 mid-size and 1 large), whose detailed statistics are shown in Table \ref{tab:xmc-exp-stats}. For all tasks, the samples are grouped under different rules including: 
(i) random clustering: each group of samples are randomly selected;
(ii) hierarchical clustering: samples are hierarchically clustered using k-means. 
Each sample in the original XMC dataset belongs to exactly one of the groups. 
As described in Section \ref{sec:alg}, \algname~learns the label embeddings, and assigns every label in the group to one of the samples based on the embeddings. 
Then, we run Parabel and compare the final performance. 
Notice that it is possible to assign labels more cleverly, however, we focus on the quality of the label embedding learned through \algname~hence we stick to this simple assigning rule. 
We consider (i) \textbf{Baseline}:  Parabel without label assignment;
(ii) \textbf{\algname-0}: \algname~without label learning ($T=0$); and (iii) \textbf{\algname}: \algname~with label learning ($T=20$ by default). 

\paragraph{Results.} We report the performance using the standard Precision@1/3/5 metrics in Table \ref{tab:amazoncat-wiki10}. From the empirical results, we find that  \textbf{\algname}~performs better than \textbf{\algname-0} almost consistently, across all tasks and all grouping methods, and is much better than  \textbf{Baseline} where we ignore such aggregation structure. 
\textbf{Baseline} performs much better only on AmazonCat-13k with hierarchical clustering, which is because of the low heterogeneity within each cluster, as theoretically explained by our Theorem \ref{thm:1}. 
Notice that the precision on standard AmazonCat-13k achieves $93.04$, which implies that the samples are easily separated.  
Furthermore, we also provide ablation study on EurLex-4K in Figure \ref{fig:eurlex-ablation} to understand the influence of group size and clustering rule. 
We report the decrement percentage over a model trained with known correspondences. 
As a sanity check, as the size of the group gets smaller and annotation gets finer in Figure \ref{fig:eurlex-ablation}-(a) \& (b), all methods have $0\%$ decrement. More interestingly, in the other regime of more coarse annotations, (a) \& (b)  show that the benefit of \algname-0 varies when the clustering rule changes while the benefit of \algname~is consistent. The consistency also exists when we  change the heterogeneity within group by injecting noise to the feature representation when running the hierarchical clustering algorithm, as shown in Figure \ref{fig:eurlex-ablation}-(c).

\subsection{MIML Experiments}

\begin{figure*}
	\setlength\abovecaptionskip{2pt}
	\centering
	\begin{minipage}[t]{0.09\linewidth}
		\includegraphics[width=\linewidth]{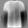}
		\caption*{T-shirt}
	\end{minipage}
	\begin{minipage}[t]{0.09\linewidth}
		\includegraphics[width=\linewidth]{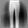}
		\caption*{Trouser}
	\end{minipage}
	\begin{minipage}[t]{0.09\linewidth}
		\includegraphics[width=\linewidth]{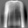}
		\caption*{Pullover}
	\end{minipage}
	\begin{minipage}[t]{0.09\linewidth}
		\includegraphics[width=\linewidth]{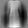}
		\caption*{Dress}
	\end{minipage}
	\begin{minipage}[t]{0.09\linewidth}
		\includegraphics[width=\linewidth]{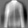}
		\caption*{Coat}
	\end{minipage}
	\begin{minipage}[t]{0.09\linewidth}
		\includegraphics[width=\linewidth]{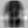}
		\caption*{Sandal}
	\end{minipage}
	\begin{minipage}[t]{0.09\linewidth}
		\includegraphics[width=\linewidth]{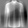}
		\caption*{Shirt}
	\end{minipage}
	\begin{minipage}[t]{0.09\linewidth}
		\includegraphics[width=\linewidth]{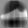}
		\caption*{Sneaker}
	\end{minipage}
	\begin{minipage}[t]{0.09\linewidth}
		\includegraphics[width=\linewidth]{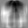}
		\caption*{Bag}
	\end{minipage}
	\begin{minipage}[t]{0.09\linewidth}
		\includegraphics[width=\linewidth]{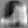}
		\caption*{Boot}
	\end{minipage}
	\caption{Visualization of learned label embeddings on Fashion-MNIST dataset.}
	\label{fig:fashion}
\end{figure*}

First, we run a set of synthetic experiments on the standard MNIST \& Fashion-MNIST image datasets, where we use the raw $784$-dimension vector as the representation. 
Each `sample' in our training set consists of $g$ random digit/clothing images and the set of corresponding labels (we set $g=4,50$). We then test the accuracy on the standard test set. 
On the other hand, we collect the standard Yelp's customer review data from the web. 
Our goal is to predict the tags of a restaurant based on the customer reviews. 
We choose $10$ labels with balanced positive samples and report the overall accuracy. 
Notice that each single review can be splitted into multiple sentences, as a result, 
we formulate it as an MIML problem similar to \cite{feng2017deep}. 
We retrieve the feature of each instance using InferSent\footnote{https://github.com/facebookresearch/InferSent}, an off-the-shelf sentence embedding method. 
We randomly collect $20$k reviews with $4$ sentences for training, $10$k reviews with single sentence for validation and $10$k reviews with single sentence for testing. 
We report the top-1 precision. 

For both the tasks, we use a two-layer feed-forward neural network as the base model, identical to the setting in \cite{feng2017deep}. 
We compare Deep-MIML with the extension of \algname-0 and \algname~for MIML, as illustrated in Figure \ref{fig:miml}. 
We first do a hyper-parameter search for Deep-MIML to find the best learning rate and the ideal epoch number. Then we fix those hyper-parameters and use them for all algorithms.

\begin{table}[]
    \centering
    \caption{Prediction accuracy on multiple MIML  tasks.    }
    \begin{tabular}{c cc c cc c cc}
        \toprule
         & \multicolumn{1}{c}{\textbf{MNIST}} &  \multicolumn{1}{c}{\textbf{Fashion}} & \multicolumn{1}{c}{\textbf{Yelp}} \\
        group size 		& $4$ / $50$ 						& $4$ / $50$ 						& $4$				\\
        \midrule 
        Deep-MIML   	& $94.70$/$33.33$ 					& $84.70$/$19.00$    				& $40.69$ 			\\
        \midrule 
        \algname-0   	& $\mathbf{94.82}$/$36.10$  		& $84.89$/$27.62$ 					& $45.82$ 			\\
        \algname  		& $\mathbf{94.82}$/$\mathbf{38.46}$	& $\mathbf{85.09}$/$\mathbf{28.65}$ & $\mathbf{46.25}$  \\
        \bottomrule
    \end{tabular} 
    \label{tab:miml}
    \vspace{-5pt}
\end{table}

\paragraph{Results.} 
The performance of \algname~in multiple MIML tasks is shown in Table \ref{tab:miml}. 
We see a $5.6\%$ absolute improvement over Deep-MIML on the Yelp dataset. 
There is consistent improvement for the image tasks as well. 
Notice that the improvement on the large group size is much significant than on the small group size. 
This is because the original Deep-MIML framework is able to handle the easy MIML tasks well, but is less effective for difficult tasks. 
Figure \ref{fig:fashion} visualizes the learned label embedding for Fashion-MNIST dataset. 
All these results corroborate the advantage of using  label embeddings at the beginning of feed-forward neural networks in MIML.

\section{Conclusions \& Discussion}
\label{sec:discussion}
In this paper, we study XMC with aggregated labels, and propose the first efficient algorithm \algname~that advances standard XMC methods in most settings. 
Our work leaves open several interesting issues to study in the future. First, while using positively labeled groups to learn label embedding, what would be the most efficient way to also learn/sample from negatively labeled groups? Second, is there a way to estimate the clustering quality and adjust the hyper-parameters accordingly? 
Moving forward, we believe the co-attention framework we proposed in Section \ref{sec:alg_extension} can help design deeper neural network architectures for MIML with better performance and interpretation.

\bibliography{ref}
\bibliographystyle{alpha}

\clearpage
\newpage

\appendix

\section{Clarifications}

\paragraph{Notations}
We could not summarize all notations in the main text due to space constraint. 
Here, we list and summarize important notations in Table \ref{tab:app-notation} for reader's reference. 

\begin{table*}[h]
	\centering
	\footnotesize 
	\caption{List of notations.}
	\noindent\makebox[\textwidth]{
	\begin{tabularx}{1.2\textwidth}{c|c||c|c||c|cccc}
		\toprule
		\multicolumn{2}{l||}{\bf Definitions related to samples}  & \multicolumn{2}{l||}{\bf Definitions related to intermediate nodes}  &   \multicolumn{2}{l}{\bf Definitions related to labels}  \\
		\midrule 
		$\Ncal$ & set of samples & $\Mcal$ & set of intermediate nodes & $\Lcal$ & set of labels \\
		$n$ & sample size & $m$ & intermediate node size & $l$ & label size \\
		$i$ & element in sample set & $j$ & element in intermediate node set & $k$ & element in label set \\
		& ( or index of a sample ) & & ( or index of an intermediate node ) & & ( or index of a label ) \\
		$\Ncal_j$ & set of samples connected to $j$ & $\Mcal_i$ & set of nodes connected to $i$ & $\Lcal_j$ & set of labels connected to $j$  \\
		 &( or set of sample indices )& &( or set of node indices )&& ( or set of label indices ) \\
		&& $\Mcal_k$ & set of nodes connected to $k$ &&\\
		&& & ( or set of node indices ) &&\\
		\midrule  
	\multicolumn{6}{l}{\bf Annotation matrix} \\
	\midrule 
	   $\Yb^1$& sample-node binary matrix  & $\Yb^2$& node-label binary matrix  & $\Yb$ & sample-label binary matrix \\
	   \midrule 
	\multicolumn{6}{l}{\bf Others related to XMC} \\ \midrule 
	$\Xb$ & data feature matrix & $\Lb$ & label embedding matrix & $\Xb_\mathcal{S}$ & submatrix of $\Xb$ with rows in $\Scal$\\
	$g$ & \# of samples in a group & $\bar{g}$ & avg. \# of samples for each group & \\
	$d$ & feature dimension & $\bar{d}$ & average sparsity of a feature \\ \midrule 
	\multicolumn{6}{l}{\bf Math related } \\ \midrule 
	$\Ab$ & general matrix & $\ab$ & general vector & $a$ & general scalar \\
	$\mathtt{nnz}(\Ab)$ & \# of non-zero entries in matrix $\Ab$ & $\mathbf{1}_d$ & all one vector with $d$ dimension & $\mathbf{1}_{d_1\times d_2}$ & all one matrix with size $d_1\times d_2$ \\
	$\otimes $& Kronecker product & $\|\Ab\|$ & spectral norm of matrix $\Ab$ & $\|\ab\|$ & $l_2$ norm of $\ab$ \\
	$\similarity{\ab_1}{\ab_2}$ & cosine similarity between $\ab_1$, $\ab_2$ & $\mathtt{Proj}(\ab)$ & $\ab/\|\ab\|$ & $\mathbf{1}\{\mathtt{e}\}$ & return $1$ if $\mathtt{e}$ is correct else $0$\\
	\midrule 
	\multicolumn{6}{l}{\bf Others} \\
	\midrule 
	$\eb_k^\star$ & ground truth label $k$'s embedding & $\eb_k/\hat{\eb}_k$ & estimate of $k$'s embedding & $\eb^t$ & estimate at $t$-th iterate \\
	\bottomrule 
	\multicolumn{6}{r}{\textit{*Theorem related definitions are not listed here. Check Theorem settings for details.} } \\
	 \bottomrule 
	\end{tabularx}
	}
	\label{tab:app-notation}
\end{table*}

\paragraph{On the efficiency of \algname}
In Section \ref{sec:alg}, we have discussed the complexity of \algname. Remind that we have $O(n\bar{d}\bar{g}\log l)$ complexity for \algname~and $O(n\bar{d}\log l), O(l\bar{d}\log l), O(n\bar{d}\log l)$ for the three steps in Parabel~\cite{prabhu2018parabel}. Notice that Parabel's complexity is analyzed under $\bar{g}=1$. As a result, if we directly apply Parabel to $\{\Xb, \Yb^1\Yb^2\}$, i.e., using the baseline approach (without label assignment), the time complexity becomes $O(n\bar{g}\bar{d}\log l), O(l\bar{d}\log l), O(n\bar{g}\bar{d}\log l)$ for the three steps, which is not better than \algname~(and slower in practice, because step 3 costs more time than step 1). 
For other XMC approaches that are less efficient, the computation complexity would also increase by a factor of $\bar{g}$. 
As a result, running the baseline approach (without label assignment) becomes less efficient compared to \algname. From the model size perspective, due to the label assignment, \algname~would not increase the model size. 
However, for the baseline approach, models that use sparse representations will see an increase in model size with a multiplicative factor roughly equals to $\bar{g}$.

\section{Additional Plots/Tables and Experimental Details}

\subsection{Simulations}

We run simulation to verify the results in Theorem \ref{thm:1}. We include a linear regression experiment and a  linear classification experiment. 
The setting for the regression experiment is as follows~\footnote{This is the equivalent to what we described in the main paper, what state it again for clarity.}: 
The input features are generated following a two-step procedure: first, the mean (center) for samples associated with each group is generated following $\Ncal(0, \sigma_1^2 \Ib_d)$; then, given the center $\xb_c$, the feature of each sample in this group is $\xb_c$ plus an additional vector that follows distribution $\Ncal(0, \sigma_2^2 \Ib_d)$.
The response of each sample, is generated following $\yb = \Bb^\star \xb_i + \epsilonb_i$, where $\Bb\in \mathbb{R}^{l\times d}$ is the parameter to be recovered, $\epsilonb \sim \Ncal(0, \sigma_e^2\Ib_d)$. 
This setting is identical to the setting we analyzed in Section \ref{sec:analysis}. 
Notice that $\sigma_2/\sigma_1$ characterizes the quality of the clustering. 

For the classification experiment, we first generate the ground truth for each label. Then, the feature of each sample is one of the ground truth label with an additional noise, and the label of the feature is the $\arg\max_i \similarity{\eb_k^\star}{\xb_i} + \epsilon_i$. 
Note that in regression setting, we used $\sigma_2/\sigma_1$ to describe how good the clustering quality is, here, we control the clustering quality by changing the ground truth labels in each group from evenly generated ($p=[1/l, \cdots, 1/l]$) to unevenly generated with probability vector $p= \frac{1}{l-1 + \exp(\sigma_2)}\cdot \mathbf{1} + \frac{\exp(\sigma_2)}{l-1 + \exp(\sigma_2)} \hb_k $ for some random label index $k\in \Lcal$, where $\hb_k$ is the one-hot vector with non-zero index at the $k$th position.

\begin{figure}[!ht]
	\centering
	\subfigure{
		\includegraphics[width=.45\linewidth]{figs/simulation_s1_1-0_regression-v4.pdf}
	}
	\subfigure{
		\includegraphics[width=.45\linewidth]{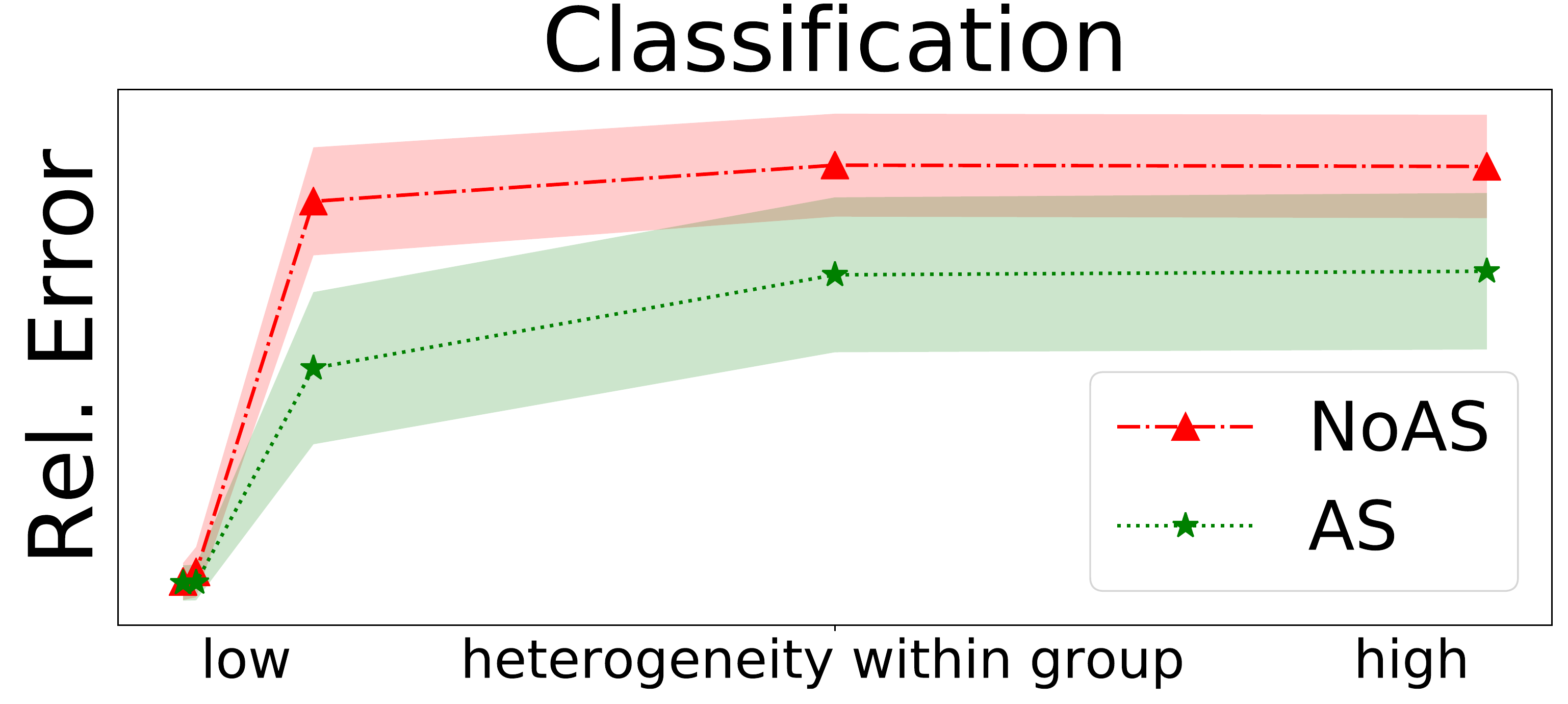}
	}
	\caption{Simulation results for a regression task (left) and a classification task (right). \textit{Rel. RMS} stands for \textit{relative root mean square error}, which calculates the ratio between RMS using the estimator over RMS using the oracle estimator with known correspondences. \textit{Rel. Error} stands for \textit{relative error}, which calculates the ratio between prediction error of the estimator over the error using the oracle estimator with known correspondences. For both plots, lower y-value corresponds to a better estimator. }
	\label{fig:app-sim}
\end{figure}

The parameters used for the experiments are as follows: $g=10$, $n=1000$, $d=10$, $l=5$, $\sigma_2 \in [0.0, 0.1, 1.0, 5.0, 10.0]$. For regression, $\sigma_1 = \sigma_e = 1.0$. For classification, $\sigma_1=0.1, \sigma_e = 0.0$. 
The results are shown in Figure \ref{fig:app-sim}. 
For both tasks, 
$\mathtt{\estimatorb}$ becomes significantly better than $\mathtt{\estimatora}$ as the heterogeneity within the group becomes higher. 

\subsection{Experiments for XMC datasets}

\paragraph{Details.} We use a default learning rate $\lambda =0.1$ and iteration number $T=20$ for all the experiments. When generating the \problemname~dataset, we aggregate label set of each sample in every group by simple list merge operation, and there may be repetitions in the list of aggregated labels. One can also use set merge that guarantees no repetitions in each aggregated group, and the results should not be significantly different. Nevertheless, this is an experimental detail we did not point out in the main text due to space constraint. For the optimal iteration number $T$, we find that increasing $T$ does not always increase the final performance. From the theoretical perspective, Theorem \ref{thm:3} only shows contraction when the previous iterate is out of the noise region. In other words, we have converge up to a noise ball. As a result, the quality of the learned embedding may get slightly worse (up to the noise level) as $T$ increase. For Wiki-10k experiments with hierarchical clustering, the reported results use $T=5$. 

\paragraph{Additional plots. } We include here additional tables / plots for the XMC experiments. 
Table \ref{tab:xmc-eurlex} presents the detailed values we got on the EurLex-4k dataset, with standard deviation calculated on $5$ random runs. 
Figure \ref{fig:app-1} and \ref{fig:app-2} show the comparison under random grouping and hierarchical clustering setting,  respectively. 
Figure \ref{fig:app-3} shows how the improvement changes when the heterogeneity within group gradually changes from low  to high. Low heterogeneity corresponds to hierarchical clustering while high heterogeneity corresponds to random grouping. For all three figures, we include the Precision @1/3/5 metrics (whereas in the main text, only Precision @1 is shown). 
Notice that cluster depth equals $d$ corresponds to $2^d$ groups in total. 
We can see consistent improvement in all plots, while we can also see that Precision @5 performs relatively worse than Precision @1. 
Part of the reason is because our algorithm does not take into account the fact that each sample may have multiple labels with different importance. 
Improving the quality of improvement for less important labels would be an interesting problem to study in the future. 

\begin{figure*}[!ht]
	\centering
	\includegraphics[width =\columnwidth ]{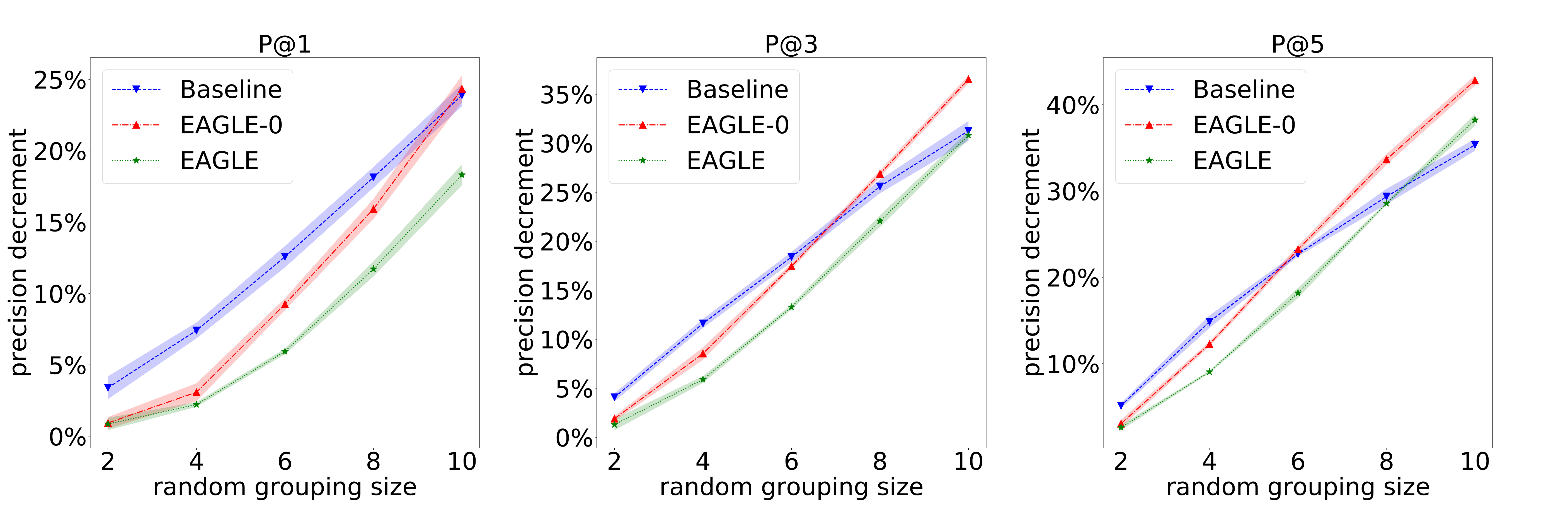}
	\caption{The effect of random grouping size to the performance. }
	\label{fig:app-1}
\end{figure*}

\begin{figure*}[!ht]
	\centering
	\includegraphics[width =\columnwidth ]{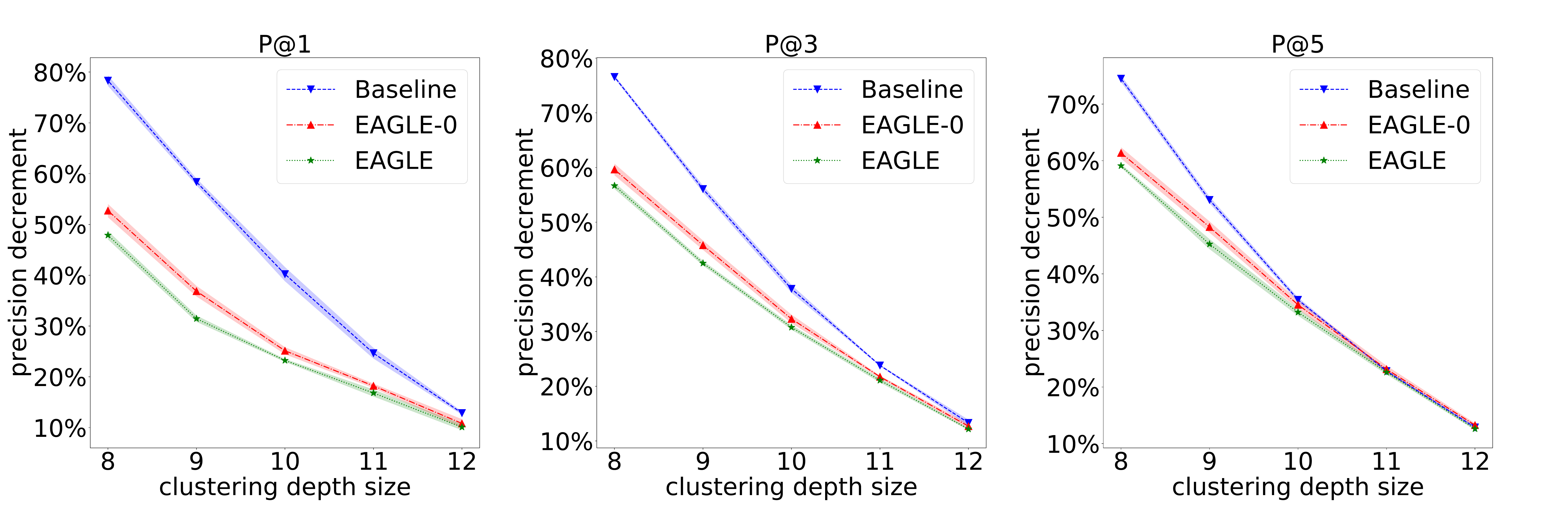}
	\caption{The effect of cluster depth to the performance. }
	\label{fig:app-2}
\end{figure*}

\begin{figure*}[!ht]
	\centering
	\includegraphics[width =\columnwidth ]{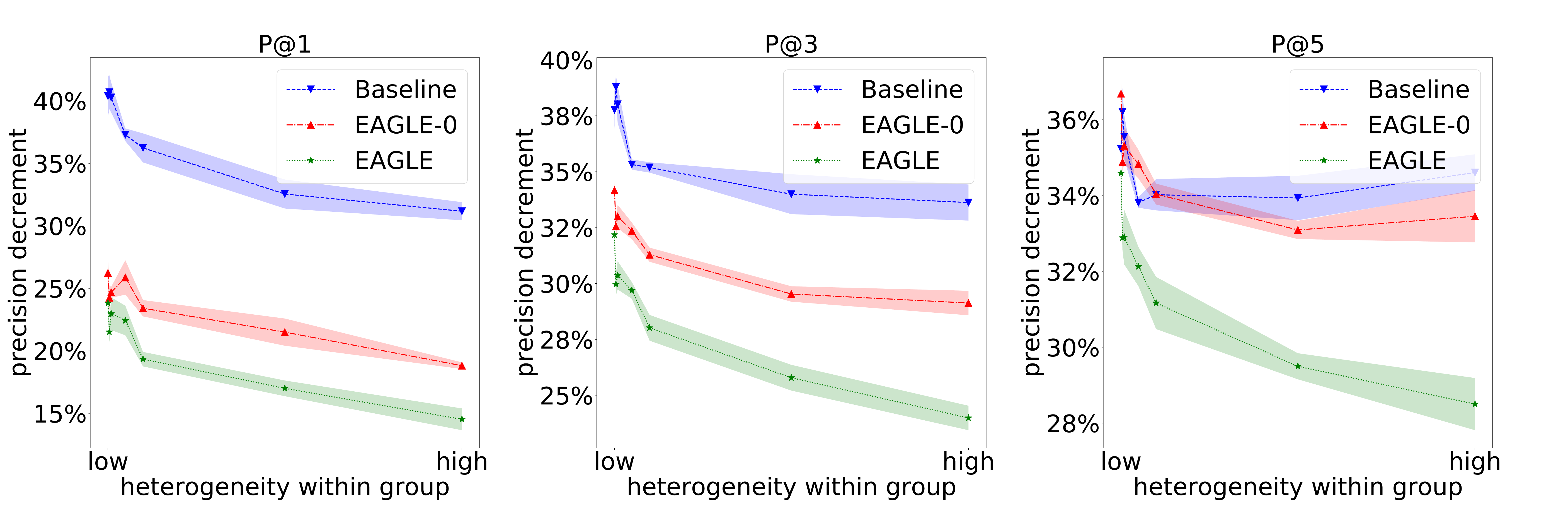}
	\caption{The effect of  within group heterogeneity to the performance. }
	\label{fig:app-3}
\end{figure*}

\begin{table*}[]
	\centering
	\caption{EurLex-4k detailed performance results. We list the standard deviation for each precision score, calculated over the result of $5$ random seeds.  }
	\begin{tabular}{c cccccc}
		\toprule
		  & & {\bf Oracle} & {\bf Baseline} & {\bf \algname-0 } & {\bf \algname} &  	\\
		\midrule 
		\multicolumn{5}{l}{\bf random grouping }\\ \midrule 
		{\bf group size} \\\midrule 
		2 	& P1 &$82.71\pm0.25$ 	& $79.90\pm0.33$ 	& $81.94\pm0.17$ 	& $82.00\pm0.18$ 	\\
		& P3 & $69.42\pm0.07$ 	& $66.59\pm0.14$ 	& $68.10\pm0.09$ 	& $68.52\pm0.20$ 	\\
		& P5 & $58.14\pm0.06$ 	& $55.16\pm0.09$ 	& $56.37\pm0.13$ 	& $56.63\pm0.09$ 	\\ \midrule 
		
		4 	& P1 & - 				& $76.58\pm0.23$ 	& $80.16\pm0.26$ 	& $80.87\pm0.07$	\\
		& P3 & - 				& $61.36\pm0.16$ 	& $63.49\pm0.22$ 	& $65.33\pm0.12$	\\
		& P5 & - 				& $49.50\pm0.21$ 	& $51.01\pm0.08$ 	& $52.88\pm0.04$ 	\\ \midrule
		
		6 	& P1 & - 				& $72.31\pm0.31$ 	& $75.06\pm0.17$ 	& $77.80\pm0.08$ 	\\
		& P3 & - 				& $56.66\pm0.18$ 	& $57.31\pm0.10$ 	& $60.20\pm0.09$ 	\\
		& P5 & - 				& $44.94\pm0.08$ 	& $44.63\pm0.12$ 	& $47.57\pm0.17$ 	\\ \midrule
		
		8 	& P1 & - 				& $67.71\pm0.30$ 	& $69.54\pm0.29$ 	& $73.02\pm0.22$	\\
		& P3 & - 				& $51.66\pm0.24$ 	& $50.76\pm0.12$ 	& $54.11\pm0.21$ 	\\
		& P5 & - 				& $41.07\pm0.26$ 	& $38.57\pm0.18$ 	& $41.53\pm0.05$ 	\\ \midrule
		
		10 	& P1 & - 				& $62.95\pm0.32$ 	& $62.58\pm0.39$ 	& $67.56\pm0.30$ 	\\
		& P3 & - 				& $47.72\pm0.35$ 	& $44.07\pm0.14$ 	& $48.03\pm0.15$ 	\\
		& P5 & - 				& $37.59\pm0.20$ 	& $33.25\pm0.17$ 	& $35.91\pm0.20$ 	\\
		\midrule 
		\multicolumn{5}{l}{\bf hierarchical clustering }\\ \midrule 
		{\bf cluster depth} \\ \midrule 
		8 	& P1 & - 				& $17.94\pm0.43$ 	& $39.11\pm0.54$ 	& $43.11\pm0.35$ 	\\
		 	& P3 & - 				& $16.27\pm0.10$ 	& $28.02\pm0.37$ 	& $30.08\pm0.21$	\\
		& P5 & - 				& $14.83\pm0.17$ 	& $22.46\pm0.29$ 	& $23.78\pm0.09$ 	\\ \midrule
		
		9 	& P1 & - 				& $34.43\pm0.28$ 	& $52.22\pm0.43$ 	& $56.70\pm0.26$	\\
		 	& P3 & - 				& $30.50\pm0.22$ 	& $37.63\pm0.30$ 	& $39.92\pm0.15$ 	\\
		& P5 & - 				& $27.29\pm0.17$ 	& $30.07\pm0.29$ 	& $31.83\pm0.26$ 	\\ \midrule
		
		10 	& P1 & - 				& $49.45\pm0.58$	& $61.92\pm0.29$ 	& $63.50\pm0.12$ 	\\
		 	& P3 & - 				& $43.17\pm0.25$	& $46.99\pm0.27$ 	& $48.06\pm0.15$	\\
		& P5 & - 				& $37.55\pm0.12$	& $38.06\pm0.22$ 	& $38.83\pm0.18$ 	\\ \midrule
		
		11 	& P1 & - 				& $62.32\pm0.44$ 	& $67.64\pm0.18$ 	& $68.80\pm0.29$ 	\\
		 	& P3 & - 				& $52.90\pm0.05$ 	& $54.34\pm0.08$ 	& $54.79\pm0.14$	\\
		& P5 & - 				& $44.87\pm0.05$ 	& $44.68\pm0.18$ 	& $45.00\pm0.11$ 	\\ \midrule
		
		12 	& P1 & - 				& $72.07\pm0.14$ 	& $73.76\pm0.35$ 	& $74.36\pm0.25$	\\
		 	& P3 & - 				& $60.19\pm0.20$ 	& $60.61\pm0.16$ 	& $60.98\pm0.07$	\\
		& P5 & - 				& $50.66\pm0.07$ 	& $50.43\pm0.10$ 	& $50.81\pm0.07$	\\
		\bottomrule
	\end{tabular} 
	\label{tab:xmc-eurlex}
\end{table*}

\subsection{MIML experiments}
\begin{table*}[]
	\centering
	\caption{Hyper-parameter search for Deep-MIML on the Yelp dataset. 
	We search over batch size (bs) in $\{32, \cdots, 512\}$, and learning rate (lr) in $\{ 1e-4, \cdots, 0.2\}$. We select bs$=64$, lr=$1e-4$ for our experiments.  }
	\begin{tabular}{c cccccccc}
		\toprule
		\textbf{Acc.} &   \textbf{$lr=0.1$} & \textbf{$lr=0.05$} & \textbf{$lr=0.01$} & \textbf{$lr=0.005$} &  \textbf{$lr=0.001$} & \textbf{$lr=0.0005$} &  \textbf{$lr=0.0001$} \\
		\midrule 
		$bs=32$ & $10.96\pm 0.41$ & $10.82\pm 0.49$ & $11.53\pm 0.85$ & $14.44 \pm 2.69$ & $27.62\pm 3.55$ & $37.65\pm 2.85$ & $40.70\pm 1.02$ \\
		$bs=64$ & $11.20\pm 0.34$ & $10.95\pm 0.62$ & $12.08\pm 1.89$ & $15.87\pm 2.92$ & $27.06\pm 4.42$ & $38.70 \pm 2.77$ & $40.69\pm 1.43$ \\
		$bs=128$ & $11.30\pm 0.28$ & $11.17\pm 0.70$ & $11.33\pm 0.80$ & $11.81\pm 1.41$ & $24.86\pm 5.94$ & $31.48\pm 3.52$ & $40.71\pm 0.65$\\
		$bs=256$ & $11.60\pm 0.41$ & $11.44\pm 0.82$ & $11.46\pm 0.63$ & $17.92\pm 3.38$ & $20.43\pm 5.50$ & $32.97\pm 2.65$ & $39.44\pm 1.52$ \\
		$bs=512$ & $11.28\pm 0.33$ & $11.76\pm 0.78$ & $11.19\pm 1.13$ & $15.40\pm 2.96$ & $25.20\pm 4.87$ & $32.29\pm 2.74$ & $38.36\pm 0.88$\\
		\bottomrule
	\end{tabular} 
	\label{tab:miml-hpo}
\end{table*}

\paragraph{Yelp dataset. } \footnote{https://www.yelp.com/dataset}
The labels we used for the Yelp review dataset are: 
' Beauty \& Spas', ' Burgers', ' Pizza', 'Food', 
' Coffee \& Tea', ' Mexican', 
' Arts \& Entertainment', ' Italian', ' Seafood', ' Desserts', ' Japanese'. These labels have balanced number of samples, which makes precision the correct metric to use.  The original sentence embedding has dimension $4096$. 
For the convenience of the experiment, we reduce the dimension to $512$ for all the embeddings through random projection. 

\paragraph{MNIST/Fashion-MNIST dataset} We use the $784$-dimension raw input as the feature, and subtract the average over all training samples. 
We generate the MIML dataset following the same principle as the \problemname~experiments. Notice that for group size being $50$, we   observe a list of possibly repetitive labels. Doing the set merge operation when aggregating the labels does not make a lot of sense here, since with very high probability, each group will be labeled by all $10$ labels, and there is nothing to be learned from. 
We also visualize the learned label embeddings for the Fashion-MNIST experiment (with group size $4$) in Figure \ref{fig:app-fashion}. 

\begin{figure*}
	\setlength\abovecaptionskip{2pt}
	\centering
	\begin{minipage}[t]{0.09\linewidth}
		\includegraphics[width=\linewidth]{figs/fashion_l0.png}
		\caption*{T-shirt}
	\end{minipage}
	\begin{minipage}[t]{0.09\linewidth}
		\includegraphics[width=\linewidth]{figs/fashion_l1.png}
		\caption*{Trouser}
	\end{minipage}
	\begin{minipage}[t]{0.09\linewidth}
		\includegraphics[width=\linewidth]{figs/fashion_l2.png}
		\caption*{Pullover}
	\end{minipage}
	\begin{minipage}[t]{0.09\linewidth}
		\includegraphics[width=\linewidth]{figs/fashion_l3.png}
		\caption*{Dress}
	\end{minipage}
	\begin{minipage}[t]{0.09\linewidth}
		\includegraphics[width=\linewidth]{figs/fashion_l4.png}
		\caption*{Coat}
	\end{minipage}
	\begin{minipage}[t]{0.09\linewidth}
		\includegraphics[width=\linewidth]{figs/fashion_l5.png}
		\caption*{Sandal}
	\end{minipage}
	\begin{minipage}[t]{0.09\linewidth}
		\includegraphics[width=\linewidth]{figs/fashion_l6.png}
		\caption*{Shirt}
	\end{minipage}
	\begin{minipage}[t]{0.09\linewidth}
		\includegraphics[width=\linewidth]{figs/fashion_l7.png}
		\caption*{Sneaker}
	\end{minipage}
	\begin{minipage}[t]{0.09\linewidth}
		\includegraphics[width=\linewidth]{figs/fashion_l8.png}
		\caption*{Bag}
	\end{minipage}
	\begin{minipage}[t]{0.09\linewidth}
		\includegraphics[width=\linewidth]{figs/fashion_l9.png}
		\caption*{Boot}
	\end{minipage}
	\caption{Visualization of learned label embedding on Fashion-MNIST dataset.}
	\label{fig:app-fashion}
\end{figure*}

\section{Proofs}

\begin{proof}[Proof of Theorem \ref{thm:1}]
	We have $n$ samples splitted into $m=n/g$ groups, each with $g$ samples. In the regression setting, both $\Ncal_j$ and $\Lcal_j$ refer to the same set of samples, so we use $\Ncal_j$ for clarity. We follow the close form solution of linear regression and the $\mathbf{\mathtt{\estimatora}}$ estimator can be written as:
	\begin{align*}
		\hat{\Bb}_{\mathtt{\estimatora}} = & \mathtt{LR}\left( \cup_{j\in \Mcal} \left\{ \left( \mathbf{1}_g^\top \Xb_{\Ncal_j}, \mathbf{1}_g^\top \Zb_{\Ncal_j}  \right) \right\} \right) \\
		= & \left( \begin{bmatrix}
		{\Xb_{\Ncal_1}}^\top \mathbf{1}_g & \cdots  & {\Xb_{\Ncal_m}}^\top \mathbf{1}_g 
		\end{bmatrix} 
		\begin{bmatrix}
		\mathbf{1}_g^\top \Xb_{\Ncal_{1}}\\ 
		\hdots \\ 
		\mathbf{1}_g^\top \Xb_{\Ncal_{m}}
		\end{bmatrix}
		 \right)^{-1}
		 \begin{bmatrix}
		 {\Xb_{\Ncal_1}}^\top \mathbf{1}_g & \cdots  & {\Xb_{\Ncal_m}}^\top \mathbf{1}_g 
		 \end{bmatrix} 
		 \begin{bmatrix}
		 \mathbf{1}_g^\top \Yb_{\Ncal_1}\\ 
		 \hdots \\ 
		 \mathbf{1}_g^\top \Yb_{\Ncal_1}
		 \end{bmatrix}.
    \end{align*}
    Re-organizing the above equation we get: 
    \begin{align*}
		 \hat{\Bb}_{\mathtt{\estimatora}} = & \left(  \sum_{j\in \Mcal} {\Xb_{\Ncal_j}}^\top \mathbf{1}_g\mathbf{1}_g^\top \Xb_{\Ncal_j} \right)^{-1} \sum_{j\in \Mcal }{\Xb_{\Ncal_j}}^\top \mathbf{1}_g \mathbf{1}_g^\top \Yb_{\Ncal_j} \\
		 = & \left(  \sum_{j\in \Mcal} {\Xb_{\Ncal_j}}^\top \mathbf{1}_g\mathbf{1}_g^\top \Xb_{\Ncal_j} \right)^{-1} \sum_{j\in \Mcal}{\Xb_{\Ncal_j}}^\top \mathbf{1}_g \mathbf{1}_g^\top \Pib^j \left(\Xb_{\Ncal_j} \Bb^\star + \Eb^j \right) \\
		 = & \Bb^\star + \left(  \sum_{j\in \Mcal} {\Xb_{\Ncal_j}}^\top \mathbf{1}_g\mathbf{1}_g^\top \Xb_{\Ncal_j} \right)^{-1} \sum_{j\in \Mcal} {\Xb_{\Ncal_j}}^\top \mathbf{1}_g\mathbf{1}_g^\top \Eb^j, 
	\end{align*}
	where the last equation uses the fact that $\mathbf{1}_g^\top \Pib^j = \mathbf{1}_g$ for any permutation matrix $\Pib^j$. 
	Therefore,
	\begin{align}
	\left\| \hat{\Bb}_{\mathtt{\estimatora}} - \Bb^\star \right\|^2 = & \left\|  \left(\sum_{j\in \Mcal} {\Xb_{\Ncal_j}}^\top \mathbf{1}_g\mathbf{1}_g^\top \Eb^i \right)^\top \left(  \sum_{j\in \Mcal} {\Xb_{\Ncal_j}}^\top \mathbf{1}_g\mathbf{1}_g^\top \Xb_{\Ncal_j} \right)^{-2} \sum_{j\in \Mcal} {\Xb_{\Ncal_j}}^\top \mathbf{1}_g\mathbf{1}_g^\top \Eb^j  \right\|. \label{eqt:app-thm1-1}
	\end{align}
	Based on how $\Xb_{\Ncal_j}$ is generated, we know that $\bar{\Xb}_j := \frac{1}{g}\mathbf{1}_g \Xb_{\Ncal_j} \sim \mathcal{N}(0, (\sigma_1^2 + \sigma_2^2/g) \Ib_d)$. Moreover, $\{ \bar{\Xb}_j \}_{j\in \Mcal}$ are independent and identically distributed. 
	Based on concentration property, we know that the middle term in (\ref{eqt:app-thm1-1}) is lower and upper bounded by  $$\left[n^{-2}g^{-2}(\sigma_1^2 + \sigma_2^2/g)^{-2}\left(1-O\left(\sqrt{\frac{d}{n}}\right)\right), n^{-2}g^{-2}(\sigma_1^2 + \sigma_2^2/g)^{-2}\left(1+O\left(\sqrt{\frac{d}{n}}\right)\right) \right]. $$ 
	Therefore,
	\begin{align*}
		\left\| \hat{\Bb}_{\mathtt{\estimatora}} - \Bb^\star \right\|^2 \le &  n^{-2}g^{-2}(\sigma_1^2 + \sigma_2^2/g)^{-2}\left(1+O\left(\sqrt{\frac{d}{n}}\right)\right)  \times \left(  n g^3 \left( \sigma_1^2 + \frac{\sigma_2^2}{g} \right) \frac{d\sigma_e^2}{g} \right) \left(1 +O\left( \sqrt{\frac{d}{n}} \right) \right) \\
		= & \frac{d\sigma_e^2}{n\left(\sigma_1^2 + \frac{\sigma_2^2}{g}\right)}\left(1+ O\left(\sqrt{\frac{d}{n}}\right) \right).
	\end{align*}
	
	Next, let us analyze $\hat{\Bb}_{\mathtt{\estimatorb}}$. 
	We use $\Ab^j$ to denote a binary square matrix with size $g$ such that each row has a unique non-zero entry ($\Ab^j$ does not need to be a permutation matrix, each column may include multiple non-zero entries or none). This $\Ab^j$ matrix describes the assignment happening within each group $j\in \Mcal$. 
	For convenience, let us assume $\Pib^i = \Ib_g$. 
	By definition, we have:
	\begin{align*}
		\hat{\Bb}_{\mathtt{\estimatorb}} = & \left( \begin{bmatrix}
		{\Xb_{\Ncal_1}}^\top {\Ab^1}^\top & \cdots  & {\Xb_{\Ncal_m}}^\top {\Ab^m}^\top 
		\end{bmatrix} 
		\begin{bmatrix}
		{\Ab^1} \Xb_{\Ncal_1}\\ 
		\hdots \\ 
		\Ab^m \Xb_{\Ncal_m}
		\end{bmatrix}
		\right)^{-1}
		\begin{bmatrix}
		{\Xb_{\Ncal_1}}^\top {\Ab^1}^\top & \cdots  & {\Xb_{\Ncal_m}}^\top {\Ab^m}^\top 
		\end{bmatrix} 
		\begin{bmatrix}
		 \Yb_{\Ncal_1}\\ 
		\hdots \\ 
		 \Yb_{\Ncal_m}
		\end{bmatrix}.
	\end{align*}
	Re-organizing the expression, we have:
	\begin{align*}
		\hat{\Bb}_{\mathtt{\estimatorb}} = & \left(  \sum_{j\in \Mcal} {\Xb_{\Ncal_j}}^\top {\Ab^j}^\top{\Ab^j} \Xb_{\Ncal_j} \right)^{-1} \sum_{j\in \Mcal}{\Xb_{\Ncal_j}}^\top {\Ab^j}^\top  (\Xb_{\Ncal_j} \Bb^\star + \Eb^j) \\
		= & \Bb^\star + \left( \underbrace{ \sum_{j\in \Mcal} {\Xb_{\Ncal_j}}^\top   \Xb_{\Ncal_j} }_{\mathcal{T}_{0} }+ \underbrace{\sum_{j\in\Mcal} {\Xb_{\Ncal_j}}^\top \Fb^j \Xb_{\Ncal_j}^i  }_{ \mathcal{T}_{1} } \right)^{-1} 
		\left(  \underbrace{
		\sum_{j\in \Mcal}{\Xb_{\Ncal_j}}^\top {\Fb^j}^\top  (\Xb_{\Ncal_j} \Bb^\star + \Eb^j) }_{\mathcal{T}_2} + \underbrace{ \sum_{j\in \Mcal}{\Xb_{\Ncal_j}}^\top   \Eb^i }_{\mathcal{T}_3}
		\right).
	\end{align*}
	Here, $\Fb^j$ is the residual matrix that quantifies the accuracy of the assignment $\Ab^j$, i.e., $\Fb^j = \Ab^j - \Ib_g$. The number of non-zero rows/columns in $\Fb^j$ is the number of incorrect assignment in group $\Ncal_j$. 
	For convenience of the analysis, we separate the expression into terms $\Tcal_0,\Tcal_1,\Tcal_2, \Tcal_3$. With these notations, we have  
	\begin{align}
		\left\| \hat{\Bb}_{\mathtt{\estimatorb}} - \Bb^\star \right\|^2 = 2 \left( \left\| \mathcal{T}_3^\top \mathcal{T}_3 \right\| + \left\| \mathcal{T}_2^\top \mathcal{T}_2 \right\| \right) \left\| {\left(\Tcal_0 + \Tcal_1\right)}^{-1}\right\|^2. \label{eqt:app-thm1-2}
	\end{align}
	Where $\Tcal_0, \Tcal_3$ are relatively easy to be controlled. We first analyze $\Tcal_0, \Tcal_3$ here.   
	Notice that different from the above analysis for $\hat{\Bb}_{\mathtt{\estimatora}}$, the random vectors in $\cup_{j\in \Mcal}\cup_{i\in\Ncal_j} \left\{ \xb_i  \right\}$ are not independent. However, we notice that the set of all vectors can be splitted into $g$ groups, where the vectors within each group are i.i.d. generated. 
	Therefore, with $n\ge O(gd\log^2 d)$, we still have 
	$\left\| \mathcal{T}_3^\top \mathcal{T}_3 \right\| \le O\left(  nd \sigma_x^2  \sigma_e^2 \right) $, and $\sigma_{\min}\left(\Tcal_0\right) = \sigma_{\max}\left(\Tcal_0\right) = O\left( n \sigma_x^2   \right)  $. 
	Plug in the result of $\Tcal_0, \Tcal_3$ into the previous expression (\ref{eqt:app-thm1-2}), we have: 
	\begin{align*}
		\left\| \hat{\Bb}_{\mathtt{\estimatorb}} - \Bb^\star \right\|_2^2  \le O\left( 
		\frac{
				n d \sigma_x^2   \sigma_e^2 + \left\| \mathcal{T}_2^\top \mathcal{T}_2 \right\| 
			}
		{
			\left( n \sigma_x^2 + \sigma_{\min}\left(\mathcal{T}_{1}\right)\right)^2
		}
		\right). 
	\end{align*}
	We next control $\Tcal_{1}$ and $\Tcal_{2}$. Let $\rtil$ be the total number of samples incorrectly assigned by $\hat{\Bb}_{\mathtt{\estimatora}}$, and we focus on  $l=1$ (Notice that with $l=1$, the residual for a fixed predictor follows Gaussian distribution, which is easier to describe. For larger $l$, the residual for a fixed predictor would follow $\chi^2$ distribution. The dependency on $l$ is not the focus of our analysis here. As a result, we stick to this simple setting. ).  
	Similar to the analysis for $\Tcal_0$, where we separate the vectors into groups and bound each group, we apply Lemma 5 in  \cite{shen2019iterative} to get $\sigma_{\min} (\Tcal_{1})=\Theta(\rtil  \sigma_x^2 )$,  $\sigma_{\max}(\Tcal_2) = O(\rtil ( \sigma_x^2 + \sigma_e^2))$. As a result, bounding $\rtil$ is the key to controlling both $\Tcal_1$ and $\Tcal_2$. 
	
	Notice that for any previous estimator $\hat{\Bb}_{\mathtt{\estimatora}}$ with upper bound $\Rcal_1$ ($\Rcal_1 := \| \hat{\Bb}_{\mathtt{\estimatora}} - \Bb^\star \|$), we know the residual for correct correspondences has variance at most $\sigma_e^2 + \Rcal_1^2 \sigma_x^2 $. 
	On the other hand, for incorrect correspondences, the variance is $\sigma_e^2 + 2\sigma_x^2 $. 
	As a result, using the result of Lemma 6 in \cite{shen2019iterative} for each group of vectors, we know that $\rtil \le c \sqrt{\frac{\sigma_e^2 + \Rcal_1^2 \sigma_x^2 }{\sigma_e^2 + 2 \sigma_x^2 }} \frac{n }{g} \cdot  g$. 
	
	As a result, the denominator is not dominated by $\sigma_{\min}(\Tcal_1)$. Therefore, 
	\begin{align*}
	\left\| \hat{\Bb}_{\mathtt{\estimatorb}} - \Bb^\star \right\|_2 \le & O\left( 
	\sqrt{
	\frac{
		n d \sigma_x^2   \sigma_e^2 + \left\| \mathcal{T}_2^\top \mathcal{T}_2 \right\| 
	}
	{
		\left( n \sigma_x^2 + \sigma_{\min}\left(\mathcal{T}_{1}\right)\right)^2
	}
	}
	\right) \\ 
	= & O\left( 
	\frac{ 
		\sqrt{
		n d \sigma_x^2   \sigma_e^2 + \left\| \mathcal{T}_2^\top \mathcal{T}_2 \right\|
		} 
	}
	{
		 n \sigma_x^2  
	}
	\right) \\
	= & O\left( 
	\frac{ 
		\sqrt{
			n d \sigma_x^2   \sigma_e^2} + n \sqrt{\sigma_e^2 + \Rcal_1^2 \sigma_x^2} \sqrt{\sigma_x^2 + \sigma_e^2}
	}
	{
		n \sigma_x^2  
	}
	\right)\\
	= &   O\left( \sqrt{\frac{d}{n\sigma_x^2}} \sigma_e \right)   +  O\left(   
	\sqrt{ {\frac{\sigma_e^2}{\sigma_x^2} +  \Rcal_{1}^2 }}
	\sqrt{  \frac{\sigma_e^2}{\sigma_x^2}  + 1} 
	\right).
	\end{align*}
\end{proof}

\begin{proof}[Proof of Theorem \ref{thm:2}]
	We study the property of the estimator  in (\ref{eqt:label_embedding}) for each label $k\in\Lcal$. According to the definition,  we know that for each intermediate node $j\in \Mcal_k$, there exists a sample connected to $j$ that belongs to label $k$, and we denote this sample to be $\xb_{(j,0)}$. On the other hand, let $\xb_{(j, \hat{\eb}_k)}$ be the sample connected to node $j$ that is closest to $\hat{\eb}_k$. Let $\Scal_k\subseteq \Mcal_k$ be the set of intermediate nodes with $\xb_{(j, \hat{\eb}_k)} = \xb_{(j,0)}$. We have:
	\begin{align}
		\sum_{j\in \Mcal_k } \langle \hat{\eb}_k, \xb_{(j, \hat{\eb}_k)} \rangle = & \sum_{j\in \Scal_k} \langle \hat{\eb}_k, \xb_{(j, 0)}\rangle + \sum_{j\in \Scal_k^C} \langle \hat{\eb}_k,\xb_{(j, \hat{\eb}_k)} \rangle \\
		 \ge &  \sum_{j\in \Mcal_k } \langle \eb_k^\star, \xb_{(j, 0)} \rangle, 
	\end{align}
	where the first equality follows by definition, and the inequality holds because of the optimality of the estimator. Reorganizing both sides of the inequality, we get
	\begin{align}
		\sum_{j\in \Scal_k} \langle \hat{\eb}_k, \xb_{(j, 0)}\rangle + \sum_{j\in \Scal_k^C} \langle \hat{\eb}_k,\xb_{(j, \hat{\eb}_k)} \rangle
		\ge &  \sum_{j\in \Mcal_k} \langle \eb_k^\star, \xb_{(j,0)} \rangle \\ 
		\sum_{j\in \Scal_k} \langle \hat{\eb}_k, \xb_{(j, 0)} \rangle \ge&  \sum_{j\in \Scal_k} \langle \eb_k^\star, \xb_{(j, 0)}\rangle + \sum_{j\in \Scal_k^C} \left( \langle \eb_k^\star, \xb_{(j,0)} \rangle - \langle \hat{\eb}_k, \xb_{(j,\hat{\eb}_k)} \rangle \right). 
	\end{align}
	Now, let us use the definition of $\xb_{(j, 0)}$, and the fact that $\langle \hat{\eb}_k,\xb_{(j, \hat{e}^\prime)}\rangle \le 1, \forall j,\hat{e}^\prime$. We have:
	\begin{align}
		\sum_{j\in \Scal_k} \left( \langle \hat{\eb}_k, \eb_k^\star \rangle + \langle \hat{\eb}_k, -\epsilonb_{(j,0)} \rangle \right) \ge & \sum_{j\in \Scal_k}  \langle \eb_k^\star, \eb_k^\star  -\epsilonb_{(j, 0)}\rangle + \sum_{j\in \Scal_k^C} \left( \langle \eb_k^\star, \eb_k^\star  -\epsilonb_{(j, 0)} \rangle - 1 \right).
	\end{align}
	Rearrange the terms and normalize by $|\Scal_k|$, we have:
	\begin{align} 
		\langle \hat{\eb}_k, \eb_k^\star\rangle \ge & 1 - \frac{1}{|\Scal_k|} \sum_{j\in \Mcal_k} \langle \eb_k^\star, \epsilon_{(j,0)}\rangle + \frac{1}{|\Scal_k|} \sum_{i\in \Scal_k} \langle \hat{\eb}_k,\epsilonb_{(j, 0)}\rangle, 
	\end{align}
	which gives us the following: 
	\begin{align} 
		\langle \hat{\eb}_k, \eb_k^\star\rangle \ge & 1 - \frac{1}{|\Scal_k|} \sum_{j\in \Mcal_k} \langle \eb_k^\star, \epsilon_{(j, 0)}\rangle + \frac{1}{|\Scal_k|} \sum_{j\in \Scal_k} \langle \eb_k^\star ,\epsilonb_{(j,0)}\rangle + \langle \hat{\eb}_k-\eb_k^\star ,\epsilonb_{(j,0)}\rangle.
	\end{align}
	Based on the definition of $f(\cdot)$ in Definition \ref{def:1},
	\begin{align}
		\langle \hat{\eb}_k, \eb_k^\star\rangle \ge &  1 -  \frac{|\Scal_k^C|}{|\Scal_k|} f(|S_k^C|/|\Mcal_k|) -  {f(|\Scal_k|/|\Mcal_k|)} \left\| \hat{\eb}_k - \eb_k^\star \right\|  \\
		\langle \hat{\eb}_k, \eb_k^\star\rangle + f(1) \left\| \hat{\eb}_k - \eb_k^\star \right\| \ge & 1 -  \frac{1-\Delta}{\Delta} f(1-\Delta).
	\end{align} 
	We next show the minimum value $\Delta=|\Scal_k|/|\Mcal_k|$ for the estimator $\hat{\eb}_k$. By definition, all true embeddings are separated by at least $\delta$, and samples from other labels at most counts for $q$ proportion among all groups. Then, $\Delta \le \alpha$ means at least $(1-\alpha)|\Mcal_k|$ groups match to other labels, which means that at least $(1-\alpha - q) |\Mcal_k|$ samples do not come from the second label. As a result, the maximum value is upper bounded by:
	\begin{align}
		|\Mcal_k| - (1-\alpha - q)|\Mcal_k| \delta +  f(1) |\Mcal_k|,
	\end{align}
	while the lower bound for $\eb^\star$ is
	\begin{align}
		|\Mcal_k| -  f(1)|\Mcal_k|.
	\end{align}
	We can find out that if $\alpha \le 1- q - \frac{2  f(1) }{\delta}$, then we get contradictory. As a result, $\Delta > 1- q - \frac{2  f(1) }{\delta}$. 
	Plug in the property of $\Delta$ back to the above inequality, we have:
	\begin{align}
		\langle \hat{\eb}_k, \eb_k^\star\rangle +  f(1) \left\| \hat{\eb} - \eb^\star \right\| \ge   1 -  \left(\frac{1}{\Delta}-1\right) f(1-\Delta)  
		\ge   1 -  \left( \frac{1}{1-q - \frac{2  f(1)}{\delta} }-1 \right) f(1).
	\end{align}
	
	Re-organize the above inequality and use basic algebra, we get
	\begin{align}
		\langle \hat{\eb}_k, \eb_k^\star \rangle \ge & \min \left\{ 1-\varepsilon, \frac{1 -   \left( \frac{1}{1- q - \frac{2  f }{ \delta } }-1 \right) f  - \sqrt{2}  f }{ \left( 1-\sqrt{2}  f  \frac{1}{1-\sqrt{\varepsilon}} \right) } \right\} \\
	\Rightarrow \langle \hat{\eb}_k, \eb_k^\star \rangle	\ge & 1 - r   f  - (\sqrt{2} r + 2)  f ^2,
	\end{align}
	where $r = \left({1- q - \frac{2  f }{\delta } }\right)^{-1}-1$.
\end{proof}

\begin{proof}[Proof of Theorem \ref{thm:3}]
	For conciseness, we ignore the subscript $k$ in the following proof. 
	Our goal is to characterize the behavior of $\similarity{\eb_k^\star}{\eb_{t+1}}$. 
	Define sample index function 
	\begin{align}
	\Ical(j, \eb_t): = \argmax_{i \in \Ncal_j} \similarity{\eb_t}{\xb_i},
	\end{align}
	as the index of the sample selected by $\eb_t$ in  group $j$ (for notation simplicity, we assume this instance is unique). Furthermore, let
	\begin{align}
	\Scal_t^\good = \left\{ j\in \Mcal \mid \Tcal\left( \Ical(j, \eb_t)\right) = k  \right\}, \quad \Scal_t^\bad = \left\{j\in \Mcal \mid \Tcal\left( \Ical(j, \eb_t)\right) \neq k \right\}.
	\end{align} 

	Now, we can express the next iterate as: 
	\newcommand{\syyaa}{\sum_{j\in \Scal_t^\good}\xb_{(j, \eb_t)} + \sum_{i\in \Scal_t^\bad}\xb_{(j, \eb_t)} }
	\begin{align}
	\eb_{t+1} = & \frac{ \syyaa }{ \norm{\syyaa} } \\
	\similarity{\eb^\star}{\eb_{t+1}} = & \frac{  \sum_{j\in \Scal_t^\good} \similarity{\eb^\star}{\xb_{(j, \eb_t)} }  + \sum_{j\in \Scal_t^\bad} \similarity{\eb^\star}{\xb_{(j, \eb_t)} } }{\norm{\syyaa}} \\
	= & \frac{  \sum_{j\in \Scal_t^\good} \similarity{\eb^\star}{\eb^\star + \epsilonb_{(j, \eb_t)}}  + \sum_{j\in \Scal_t^\bad} \similarity{\eb^\star}{\xb_{(j, \eb_t)} } }{\norm{\syyaa}} \\
	= & \frac{ |\Scal_t^\good| +   {\color{black}\sum_{j\in \Scal_t^\good} \similarity{\eb^\star}{ \epsilonb_{(j,0)}} } + \sum_{j\in S_t^\bad} \similarity{\eb^\star}{\xb_{(j, \eb_t)} } }{\norm{\syyaa}}. \label{eqt:1}
	\end{align}
	According to the property of $\Scal_t^\bad$, 
	\begin{align}
	&\similarity{\xb_{(j, \eb_t)} }{\eb_t} \ge \similarity{\xb_{(j, 0)} }{\eb_t} = \similarity{\eb^\star}{\eb_t} + {\color{black} \similarity{\epsilonb_{(j, 0)}}{\eb_t}  }.
	\end{align}
	Therefore, 
	\begin{align} 
	\similarity{\eb^\star}{\xb_{(j, \eb_t)} } =& \similarity{\eb_t}{\xb_{(j, \eb_t)} } + \similarity{\eb^\star - \eb_t}{\xb_{(j, \eb_t)} } \\
	\ge& 
	\similarity{\eb^\star}{\eb_t} + {\color{black}{\similarity{\epsilonb_{(j,0)}}{\eb_t}} } +  \similarity{\eb^\star - \eb_t}{\xb_{(j, \eb_t)} } \\
	\ge& \similarity{\eb^\star}{\eb_t}  - \norm{\eb^\star - \eb_t} + {\color{black}{\similarity{\epsilonb_{(j, 0)}}{\eb_t} } }.
	\end{align}
	
	Plug in the result into (\ref{eqt:1}), 
	we have
	\begin{align}
	&\similarity{\eb^\star}{\eb_{t+1}}\\
	\ge& \frac{|\Scal_t^\good| + {\color{black} \sum_{j\in \Scal_t^\good} \similarity{\eb^\star}{ \epsilonb_{(j, 0)}}  }  + \sum_{j\in \Scal_t^\bad} \left( \similarity{\eb^\star}{\eb_t}  - \norm{\eb^\star - \eb_t} + {\color{black}{\similarity{\epsilonb_{(j, 0)}}{\eb_t}} } \right) }{\norm{\syyaa} } \\
	\ge & \frac{ |\Scal_t^\good | + |\Scal_t^\bad|  \left( \similarity{\eb^\star}{\eb_{t}} - \norm{\eb^\star - \eb_t}\right) + {\color{black} \sum_{j\in \Scal_t^\good} \similarity{\eb^\star}{ \epsilonb_{(j, 0)}}  } + {\color{black} \sum_{j\in \Scal_t^\bad} {\similarity{\epsilonb_{(j, 0)}}{\eb_t}} } }{ n}.
	\end{align}
	Denote $|\Scal_t^\good| /n = \alpha_t$, as a result, we have: 
	\begin{align}
	\similarity{\eb^\star }{\eb_{t+1}} \ge & \alpha_t + (1-\alpha_t) \left( \similarity{\eb^\star}{\eb_{t}} - \norm{\eb^\star - \eb_t}\right) + { \color{black} \frac{\sum_{j\in S_t^\good} \similarity{\eb^\star}{ \epsilonb_{(j, 0)}} + \sum_{i\in S_t^\bad} {\similarity{\epsilonb_{(j, 0)}}{\eb_t}} }{ n} } \label{eqt:1-step-app} \\
	\ge & \alpha_t + (1-\alpha_t) \left( \similarity{\eb^\star}{\eb_{t}} - \norm{\eb^\star - \eb_t}\right) -\alpha_t f(\alpha_t) - (1-\alpha_t) f(1-\alpha_t) \label{eqt:1-step-2-app} \\
	\ge & \alpha_t + (1-\alpha_t) \left( \similarity{\eb^\star}{\eb_{t}} - \norm{\eb^\star - \eb_t}\right) -f.
	\end{align}
	
\end{proof}

\end{document}